\newcommand{\squishlist}{
 \begin{list}{$\bullet$}
  { \setlength{\itemsep}{0pt}
     \setlength{\parsep}{3pt}
     \setlength{\topsep}{3pt}
     \setlength{\partopsep}{0pt}
     \setlength{\leftmargin}{1.5em}
     \setlength{\labelwidth}{1em}
     \setlength{\labelsep}{0.5em} } }
\newcommand{\squishlisttwo}{
 \begin{list}{$\bullet$}
  { \setlength{\itemsep}{0pt}
    \setlength{\parsep}{0pt}
    \setlength{	opsep}{0pt}
    \setlength{\partopsep}{0pt}
    \setlength{\leftmargin}{2em}
    \setlength{\labelwidth}{1.5em}
    \setlength{\labelsep}{0.5em} } }
\newcommand{\squishend}{
  \end{list}  }
\newcommand{\R}[0]{\mathbb{R}}
\definecolor{dark2green}{rgb}{0.1, 0.65, 0.3}
\definecolor{dark2orange}{rgb}{0.9, 0.4, 0.}
\definecolor{dark2purple}{rgb}{0.4, 0.4, 0.8}
\newcommand{\first}[1]{\textbf{\textcolor{dark2green}{#1}}}
\newcommand{\second}[1]{\textbf{\textcolor{dark2orange}{#1}}}
\newcommand{\third}[1]{\textbf{\textcolor{dark2purple}{#1}}}
\definecolor{c1}{HTML}{6E001B} 
\definecolor{c2}{HTML}{6E001B} 
\definecolor{c3}{HTML}{5D9761} 
\definecolor{myblue}{HTML}{7BB2DD} 
\definecolor{mygray}{HTML}{DBE2E9}
\newcommand{\head}[1]{\vspace{1.7mm}\noindent{\textcolor{c1}{\bf #1.}}}
\theoremstyle{plain}
\newtheorem{theorem}{Theorem}[section]
\theoremstyle{definition}
\theoremstyle{remark}
\icmltitlerunning{Graph Mamba}
\begin{document}

\icmltitle{Graph Mamba: Towards Learning on Graphs with State Space Models}



\icmlsetsymbol{equal}{*}

\begin{icmlauthorlist}
\icmlauthor{Ali Behrouz}{equal,cornell}
\icmlauthor{Farnoosh Hashemi}{equal,cornell}
\end{icmlauthorlist}

\icmlaffiliation{cornell}{Cornell University, Ithaca, USA}

\icmlcorrespondingauthor{Ali Behrouz}{ab2947@cornell.edu}
\icmlcorrespondingauthor{Farnoosh Hashemi}{sh2574@cornell.edu}


\vskip 0.3in



 \printAffiliationsAndNotice{\icmlEqualContribution} 
\begin{abstract}
Graph Neural Networks (GNNs) have shown promising potential in graph representation learning. The majority of GNNs define a local message-passing mechanism, propagating information over the graph by stacking multiple layers. These methods, however, are known to suffer from two major limitations: over-squashing and poor capturing of long-range dependencies. Recently, Graph Transformers (GTs) emerged as a powerful alternative to Message-Passing Neural Networks (MPNNs). GTs, however, have quadratic computational cost, lack inductive biases on graph structures, and rely on complex Positional/Structural Encodings (SE/PE). In this paper, we show that while Transformers, complex message-passing, and SE/PE are sufficient for good performance in practice, neither is necessary. Motivated by the recent success of State Space Models (SSMs), such as Mamba, we present Graph Mamba Networks (GMNs), a general framework for a new class of GNNs based on selective SSMs. We discuss and categorize the new challenges when adapting SSMs to graph-structured data, and present four required and one optional steps to design GMNs, where we choose (1) Neighborhood Tokenization, (2) Token Ordering, (3) Architecture of Bidirectional Selective SSM Encoder, (4) Local Encoding, and dispensable (5) PE and SE. We further provide theoretical justification for the power of GMNs. Experiments demonstrate that despite much less computational cost, GMNs attain an outstanding performance in long-range, small-scale, large-scale, and heterophilic benchmark datasets. The code is in \href{https://github.com/GraphMamba/GMN}{\textcolor{c1}{this link}}. 
\end{abstract}

\section{Introduction}\label{sec:introduction}
Recently, graph learning has become an important and popular area of study due to its impressive results in a wide range of applications, like neuroscience~\cite{brain-mixer}, social networks~\citep{fan2019graph}, molecular graphs~\citep{wang2021molecular}, etc. In recent years, Message-Passing Neural Networks (MPNNs), which iteratively aggregate neighborhood information to learn the node/edge representations, have been the dominant paradigm in machine learning on graphs~\citep{kipf2016semi, GAT, wu2020comprehensive, gutteridge2023drew}. They, however, have some inherent limitations, including over-squashing~\citep{di2023oversquash}, over-smoothing~\citep{rusch2023oversmoothing}, and poor capturing of long-range dependencies~\citep{long-range-data}. With the rise of Transformer architectures~\citep{transformer} and their success in diverse applications such as natural language processing~\citep{wolf2020transformers} and computer vision~\citep{liu2021swin}, their graph adaptations, so-called Graph Transformers (GTs), have gained popularity as the alternatives of MPNNs~\citep{yun2019graph, kim2022pure, GPS}.

Graph transformers have shown promising performance in various graph tasks, and their variants have achieved top scores in several graph learning benchmarks~\citep{hu2020open, long-range-data}. The superiority of GTs over MPNNs is often explained by MPNNs' bias towards encoding local structures~\citep{muller2023attending}, while a key underlying principle of GTs is to let nodes attend to all other nodes through a global attention mechanism~\citep{kim2022pure, yun2019graph}, allowing direct modeling of long-range interactions. Global attention, however, has weak inductive bias and typically requires incorporating information about nodes' positions to capture the graph structure~\citep{GPS, kim2022pure}. To this end, various positional and structural encoding schemes based on spectral and graph features have been introduced~\citep{kreuzer2021rethinking, kim2022pure, lim2023sign}. 

Despite the fact that GTs with proper positional encodings (PE) are universal approximators and provably more powerful than any Weisfeiler-Lehman isomorphism test (WL test)~\citep{kreuzer2021rethinking}, their applicability to large-scale graphs is hindered by their poor scalability. That is, the standard global attention mechanism on a graph with $n$ nodes incurs both time and memory complexity of $\mathcal{O}(n^2)$, quadratic in the input size, making them infeasible on large graphs. To overcome the high computational cost, inspired by linear attentions~\citep{zaheer2020big}, sparse attention mechanisms on graphs attracts attention~\citep{GPS, shirzad2023exphormer}. For example, Exphormer~\citep{shirzad2023exphormer} suggests using expander graphs, global connectors, and local neighborhoods as three patterns that can be incorporated in GTs, resulting in a sparse and efficient attention. Although sparse attentions partially overcome the memory cost of global attentions, GTs based on these sparse attentions~\citep{GPS, shirzad2023exphormer} still might suffer from quadratic time complexity. That is, they require costly PE (e.g., Laplacian eigen-decomposition) and structural encoding (SE) to achieve their best performance, which can take $\mathcal{O}(n^2)$ to compute.

Another approach to improve GTs’ high computational cost is to use subgraph tokenization~\citep{chen2023nagphormer, zhao2021gophormer, kuang2021coarformer, baek2021accurate, graph-mlpmixer}, where tokens (a.k.a patches) are small subgraphs extracted with a pre-defined strategy. Typically, these methods obtain the initial representations of the subgraph tokens by passing them through an MPNN. Given $k$ extracted subgraphs (tokens), the time complexity of these methods is $\mathcal{O}(k^2)$, which is more efficient than typical GTs with node tokenization. Also, these methods often do not rely on complex PE/SE, as their tokens (subgraphs) inherently carry inductive bias. These methods, however, have two major drawbacks: (1) To achieve high expressive power, given a node, they usually require at least a subgraph per each remaining node~\citep{subgraph-expressiveness, bar-shalom2023subgraphormer}, meaning that $k \in \mathcal{O}(n)$ and so the time complexity is $\mathcal{O}(n^2)$.  (2) Encoding subgraphs via MPNNs can transmit all their challenges of over-smoothing and over-squashing, limiting their applicability to heterophilic and long-range graphs. 

Recently, Space State Models (SSMs), as an alternative of attention-based sequence modeling architectures like Transformers have gained increasing popularity due to their efficiency~\citep{zhang2023effectively, nguyen2023hyenadna}. They, however, do not achieve competitive performance with Transformers due to their limits in input-dependent context compression in sequence models, caused by their time-invariant transition mechanism. To this end, \citet{gu2023mamba} present Mamba, a selective state space model that uses recurrent scans along with a selection mechanism to control which part of the sequence can flow into the hidden states. This selection can simply be interpreted as using data-dependent state transition mechanism (See \S\ref{sec:ssm} for a detailed discussion). Mamba outstanding performance in language modeling, outperforming Transformers of the same size and matching Transformers twice its size, motivates several recent studies to adapt its architecture for different data modalities~\citep{liu2024vmamba, yang2024vivim, zhu2024vision, ahamed2024mambatab}.

Mamba architecture is specifically designed for sequence data and the complex non-causal nature of graphs makes directly applying Mamba on graphs challenging. Further, natural attempts to replace Transformers with Mamba in existing GTs frameworks (e.g., GPS~\citep{GPS}, TokenGT \citep{kim2022pure}) results in suboptimal performance in both effectiveness and time efficiency (See \S\ref{sec:experiments} for evaluation and \S\ref{sec:motivations} for a detailed discussion). The reason is, contrary to Transformers that allows each node to interact with all the other nodes, Mamaba, due to its recurrent nature, only incorporates information about previous tokens (nodes) in the sequence. This introduces new challenges compared to GTs: (1) The new paradigm requires token ordering that allows the model take advantage of the provided positional information as much as possible. (2) The architecture design need to be more robust to permutation than a pure sequential encoder (e.g., Mamba). (3) While the quadratic time complexity of attentions can dominate the cost of PE/SE in GTs, complex PE/SE (with $\mathcal{O}(n^2)$ cost) can be a bottleneck for scaling Graph Mamba on large graphs.

\head{Contributions}  To address all the abovementioned limitations, we present Graph Mamba Networks (GMNs), a new class of machine learning on graphs based on state space models (Figure~\ref{fig:framework} shows the schematic of the GMNs). In summary our contributions are:
\squishlist   
    \item \head{Recipe for Graph Mamba Networks} We discuss new challenges of GMNs compared to GTs in architecture design and motivate our recipe with four required and one optional steps to design GMNs. In particular, its steps are (1) Tokenization, (2) Token Ordering, (3) Local Encoding, (4) Bidirectional Selective SSM Encoder and dispensable (5) PE and SE.
    \item \head{An Efficient Tokenization for Bridging Frameworks} Literature lacks a common foundation about what constitutes a good tokenization. Accordingly, architectures are required to choose either node- or subgraph-level tokenization, while each of which has its own (dis)advantages, depending on the data. We present a graph tokenization process that not only is fast and efficient, but it also bridges the node- and subgraph-level tokenization methods using a single parameter. Moreover, the presented tokenization has implicit order, which is specially important for sequential encoders like SSMs.
    \item \head{New Bidirectional SSMs for Graphs} Inspired by Mamba, we design a SSM architecture that scans the input sequence in two different directions, making the model more robust to permutation, which is particularly important when we do not use implicitly ordered tokenization on graphs. 
    \item \head{Theoretical Justification} We provide theoretical justification for the power of GMNs and show that they are universal approximator of any functions on graphs. We further show that GMNs using proper PE/SE is more expressive than any WL test, matching GTs in this manner. 
    \item \head{Outstanding Performance and New Insights} Our experimental evaluations demonstrate that GMNs attain an outstanding performance in long-range, small-scale, large-scale, and heterophilic benchmark datasets, while consuming less GPU memory. These results show that while Transformers, complex message-passing, and SE/PE are sufficient for good performance in practice, neither is necessary. We further perform ablation study and validate the contribution of each architectural choice. 
\squishend

\begin{figure}
    \centering
    \hspace*{-5ex}\includegraphics[width=1.1\linewidth]{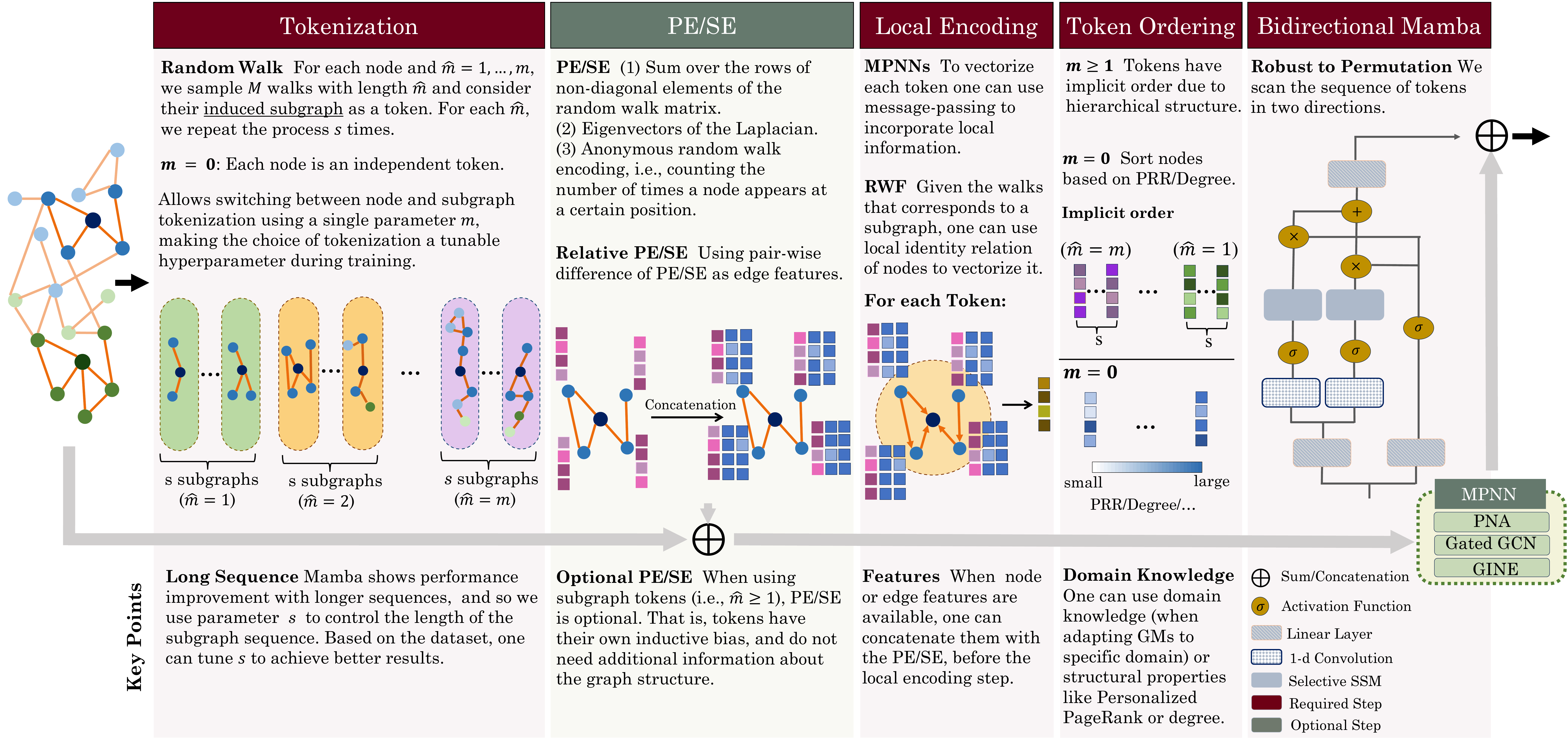}
    \vspace{-3ex}
    \caption{Schematic of the GMNs with four required and one optional steps: (1) Tokenization: the graph is mapped into a sequence of tokens ($m \geq 1$: subgraph and $m = 0$: node tokenization) (2) (\underline{Optional Step}) PE/SE: inductive bias is added to the architecture using information about the position of nodes and the strucutre of the graph. (3) Local Encoding: local structures around each node are encoded using a subgraph vectorization mechanism. (4) Token Ordering: the sequence of tokens are ordered based on the context. (\underline{Subgraph tokenization ($m \geq 1$) has implicit order} and does not need this step). (5) (Stack of) Bidirectional Mamba: it scans and selects relevant nodes or subgraphs to flow into the hidden states. $^\dagger$ In this figure, the last layer of bidirectional Mamba, which performs as a readout on all nodes, is omitted for simplicity.}
    \label{fig:framework}
\end{figure}

\section{Related Work and Backgrounds}\label{sec:RW}
To situate GMNs in a broader context, we discuss four relevant types of machine learning methods:

\subsection{Message-Passing Neural Networks}
Message-passing neural networks are a class of GNNs that iteratively aggregate local neighborhood information to learn the node/edge representations~\citep{kipf2016semi}. MPNNs have been the dominant paradigm in machine learning on graphs, and attracts much attention, leading to various powerful architectures, e.g., GAT~\citep{GAT}, GCN~\citep{henaff2015deep, kipf2016semi}, GatedGCN~\citep{bresson2017residual}, GIN~\citep{xu2018how}, etc. Simple MPNNs, however, are known to suffer from some major limitations including: (1) limiting their expressivity to the 1-WL isomorphism test~\citep{xu2018how}, (2) over-smoothing~\citep{rusch2023oversmoothing}, and (3) over-squashing~\citep{alon2021on, di2023oversquash}. Various methods have been developed to augment MPNNs and overcome such issues, including higher-order GNNs~\citep{morris2019weisfeiler, morris2020weisfeiler}, graph rewiring~\citep{gutteridge2023drew, rodriguez2022diffwire}, adaptive and cooperative GNNs~\citep{errica2023adaptive, finkelshtein2023cooperative}, and using additional features~\citep{sato2021random, murphy2019relational}.

\subsection{Graph Transformers}
With the rise of Transformer architectures~\citep{transformer} and their success in diverse applications such as natural language processing~\citep{wolf2020transformers} and computer vision~\citep{liu2021swin}, their graph adaptations have gained popularity as the alternatives of MPNNs~\citep{yun2019graph, kim2022pure, GPS}. Using a full global attention, GTs consider each pair of nodes connected~\citep{yun2019graph} and so are expected to overcome the problems of over-squashing and over-smoothing in MPNNs~\citep{kreuzer2021rethinking}. GTs, however, have weak inductive bias and needs proper positional/structural encoding to learn the structure of the graph~\cite{kreuzer2021rethinking, GPS}. To this end, various studies have focused on designing powerful positional and structural encodings~\citep{wang2022equivariant, ying2021transformers, kreuzer2021rethinking, shiv2019novel}.

\head{Sparse Attention}
While GTs have shown outstanding performance in different graph tasks on small-scale datasets (up to 10K nodes), their quadratic computational cost, caused by their full global attention, has limited their applicability to large-scale graphs \citep{GPS}. Motivated by linear attention mechanisms (e.g., BigBird~\citep{zaheer2020big} and Performer~\citep{choromanski2021rethinking}), which are designed to overcome the same scalability issue of Transformers on long sequences, using sparse Transformers in GT architectures has gained popularity~\citep{GPS, shirzad2023exphormer, kong2023goat, liu2023gapformer, wu2023kdlgt}. The main idea of sparse GTs models is to restrict the attention pattern, i.e., the pairs of nodes that can interact with each other. As an example, \citet{shirzad2023exphormer} present Exphormer, the graph adaption of BigBird that uses three sparse patterns of (1) expander graph attention,  (2) local attention among neighbors, and (3) global attention by connecting virtual nodes to all non-virtual nodes.

\head{Subgraph Tokenization} 
Another method to overcome GTs’ high computational cost is to use subgraph tokenization~\citep{chen2023nagphormer, zhao2021gophormer, baek2021accurate, graph-mlpmixer}, where tokens are small subgraphs extracted with a pre-defined strategy. These subgraph tokenization strategies usually are $k$-hop neighborhood (given a fixed $k$)~\citep{nguyen2022universal, hussain2022global, park2022deformable}, learnable sample of neighborhood~\citep{zhang2022hierarchical}, ego-networks~\citep{zhao2021gophormer}, hierarchical $k$-hop neighborhoods~\citep{chen2023nagphormer}, graph motifs~\citep{rong2020self}, and graph partitions~\citep{graph-mlpmixer}. To vectorize each token, subgraph-based GT methods typically rely on MPNNs, making them vulnerable to over-smoothing and over-squashing. Most of them also use a fixed neighborhood of each node, missing the hierarchical structure of the graph. The only exception is NAGphormer~\citep{chen2023nagphormer} that uses all $k= 1, \dots, K$-hop neighborhoods of each node as its corresponding tokens. Although this tokenization lets the model learn the hierarchical structure of the graph, by increasing the hop of the neighborhood, its tokens become exponentially larger, limiting its ability to scale to large graphs.

\subsection{State Space Models}\label{sec:ssm}
State Space Models (SSMs), a type of sequence models, are usually known as linear time-invariant systems that map input sequence $x(t) \in \R^L$ to response sequence $y(t) \in \R^L$~\citep{aoki2013state}. Specifically, SSMs use a latent state $h(t) \in \R^{N \times L}$, evolution parameter $\mathbf{A}\in \R^{N\times N}$, and projection parameters $\mathbf{B} \in \R^{N\times 1}, \mathbf{C}\in \R^{1 \times N}$ such that:
\begin{align}\nonumber
    &h'(t) = \mathbf{A} \: h(t) + \mathbf{B} \: x(t), \\
    & y(t) = \mathbf{C} \: h(t).
\end{align}
Due to the hardness of solving the above differential equation in deep learning settings, discrete space state models~\citep{gu2020hippo, zhang2023effectively} discretize the above system using a parameter $\boldsymbol{\Delta}$:
\begin{align}\nonumber
    &h_t = \bar{\mathbf{A}} \: h_{t-1} + \bar{\mathbf{B}} \: x_t,\\ \label{eq:ssm1}
    &y_t = \mathbf{C} \: h_t, 
\end{align}
where 
\vspace{-2ex}
\begin{align}\nonumber
    &\bar{\mathbf{A}} = \exp\left( \boldsymbol{\Delta} \mathbf{A} \right),\\ \label{eq:ssm_disc}
    &\bar{\mathbf{B}} = \left( \boldsymbol{\Delta} \mathbf{A} \right)^{-1} \left( \exp \left( \boldsymbol{\Delta} \mathbf{A} - I \right) \right) \: . \: \boldsymbol{\Delta} \mathbf{B}.
\end{align}
\citet{gu2020hippo} shows that discrete-time SSMs  are equivalent to the following convolution:
\begin{align}\nonumber
    &\bar{\mathbf{K}} = \left( \bar{\mathbf{C}} \bar{\mathbf{B}}, \bar{\mathbf{C}} \bar{\mathbf{A}} \bar{\mathbf{B}}, \dots, \bar{\mathbf{C}} \bar{\mathbf{A}}^{L-1} \bar{\mathbf{B}} \right),\\ \label{eq:ssm2}
    &y = x \ast \bar{\mathbf{K}},
\end{align}
and accordingly can be computed very efficiently. Structured state space models (S4), another type of SSMs, are efficient alternatives of attentions and have improved efficiency and scalability of SSMs using reparameterization~\citep{gu2022efficiently, fu2023hungry, nguyen2023hyenadna}. SSMs show promising performance on timeseries data~\citep{zhang2023effectively, GraphS4mer}, Genomic sequence~\citep{nguyen2023hyenadna}, healthcare domain~\citep{gu2021combining}, and computer vision~\citep{gu2021combining, nguyen2022s4nd}. They, however, lack selection mechanism, causing missing the context as discussed by \citet{gu2023mamba}. Recently, \citet{gu2023mamba} introduce an efficient and powerful selective structured state space architecture, called \textsc{Mamba}, that uses recurrent scans along with a selection mechanism to control which part of the sequence can flow into the hidden states. The selection mechanism of Mamba can be interpreted as using data-dependent state transition mechanisms, i.e., making $\mathbf{B}, \mathbf{C},$ and $\boldsymbol{\Delta}$ as function of input $x_t$. Mamba outstanding performance in language modeling, outperforming Transformers of the same size and matching Transformers twice its size, motivates several recent studies to adapt its architecture for different data modalities and tasks~\citep{liu2024vmamba, yang2024vivim, zhu2024vision, ahamed2024mambatab, xing2024segmamba, liu2024swin, ma2024u}.

\section{Challenges \& Motivations: Transformers vs Mamba}\label{sec:motivations}
Mamba architecture is specifically designed for sequence data and the complex non-causal nature of graphs makes directly applying Mamba on graphs challenging. Based on the common applicability of Mamba and Transformers on tokenized sequential data, a straightforward approach to adapt Mamba for graphs is to replace Transformers with Mamba in GTs frameworks, e.g., TokenGT~\citep{kim2022pure} or GPS~\citep{GPS}. However, this approach might not fully take advantage of selective SSMs due to ignoring some of their special traits. In this section, we discuss new challenges for GMNs compared to GTs.

\head{Sequences vs 2-D Data}
It is known that the self-attentive architecture corresponds to a family of permutation equivariant functions~\citep{lee2019set, liu2020learning}. That is, the attention mechanism in Transformers~\citep{transformer} assumes a connection between each pair of tokens, regardless of their positions in the sequence, making it permutation equivariant. Accordingly, Transformers lack inductive bias and so properly positional encoding is crucial for their performance, whenever the order of tokens matter~\citep{transformer, liu2020learning}. On the other hand, Mamba is a sequential encoder and scans tokens in a recurrent manner (potentially less sensitive to positional encoding). Thus, it expects causal data as an input, making it challenging to be adapted to 2-D (e.g., images)~\citep{liu2024vmamba} or complex graph-structured data. Accordingly, while in graph adaption of Transformers mapping the graph into a sequence of tokens along with a positional/structural encodings were enough, sequential encoders, like SSMs, and more specifically Mamba, require an ordering mechanism for tokens.

Although this sensitivity to the order of tokens makes the adaption of SSMs to graphs challenging, it can be more powerful whenever the order matters. For example, learning the hierarchical structures in the neighborhood of each node ($k$-hops for $k=1, \dots, K$), which is implicitly ordered, is crucial in different domains~\citep{zhong2022long, lim2023brain}. Moreover, it provides the opportunity to use domain knowledge when the order matters~\citep{yu2020order}. In our proposed framework, we provide the opportunity for both cases: (1) using domain knowledge or structural properties (e.g., Personalized PageRank~\citep{page1998pagerank}) when the order matters, or (2) using implicitly ordered subgraphs (no ordering is needed). Furthermore, our bidirectional encoder scans nodes in two different directions, being capable of learning equivariance functions on the input, whenever it is needed.

\head{Long-range Sequence Modeling}
In graph domain, the sequence of tokens, either node, edge, or subgraph, can be counted as the context. Unfortunately, Transformer architecture, and more specifically GTs, are not scalable to long sequence. Furthermore, intuitively, more context (i.e., longer sequence) should lead to better performance; however, recently it has been empirically observed that many sequence models do not improve with longer context in language modeling~\citep{shi2023large}. Mamba, because of its selection mechanism, can simply filter irrelevant information and also reset its state at any time. Accordingly, its performance improves monotonically with sequence length~\citep{gu2023mamba}. To this end, and to fully take advantage of Mamba, one can map a graph or node to long sequences, possibly bags of various subgraphs. Not only the long sequence of tokens can provide more context, but it also potentially can improve the expressive power~\citep{bevilacqua2022equivariant}.

\head{Scalability}
Due to the complex nature of graph-structured data, sequential encoders, including Transformers and Mamba, require proper positional and structural encodings~\citep{GPS, kim2022pure}. These PEs/SEs, however, often have quadratic computational cost, which can be computed once before training. Accordingly, due to the quadratic time complexity of Transformers, computing these PEs/SEs was dominated and they have not been the bottleneck for training GTs. GMNs, on the other hand, have linear computational cost (with respect to both time and memory), and so constructing complex PEs/SEs can be their bottleneck when training on very large graphs. This bring a new challenge for GMNs, as they need to either (1) do not use PEs/SEs, or (2) use their more efficient variants to fully take advantage of SSMs efficiency. Our architecture design make the use of PE/SE optional and our empirical evaluation shows that GMNs without PE/SE can achieve competitive performance compared to methods with complex PEs/SEs.

\noindent
\textcolor{c1}{\textbf{Node or Subgraph}?}
In addition to the above new challenges, there is a lack of common foundation about what constitutes a good tokenization, and what differentiates them, even in GT frameworks. Existing methods use either node/edge~\citep{shirzad2023exphormer, GPS, kim2022pure}, or subgraph tokenization methods~\citep{chen2023nagphormer, zhao2021gophormer, graph-mlpmixer}. While methods with node tokenization are more capable of capturing long-range dependencies, methods with subgraph tokens have more ability to learn local neighborhoods, are less rely on PE/SE~\citep{chen2023nagphormer}, and are more efficient in practice. Our architecture design lets switching between node and subgraph tokenization using a single parameter $m$, making the choice of tokenization a tunable hyperparameter during training.

\section{Graph Mamba Networks}
In this section, we provide our five-step recipe for powerful, flexible, and scalable Graph Mamba Networks. Following the discussion about the importance of each step, we present our architecture. The overview of the GMN framework is illustrated in Figure~\ref{fig:framework}.

Throughout this section, we let $G = (V, E)$ be a graph, where $V= \{v_1, \dots, v_n\}$ is the set of nodes and $E \subseteq V\times V$ is the set of edges. We assume each node $v \in V$ has a feature vector $\mathbf{x}^{(0)}_v \in \mathbf{X}$, where $\mathbf{X} \in \R^{n \times d}$ is the feature matrix describing the attribute information of nodes and $d$ is the dimension of feature vectors. Given $v \in V$, we let $\mathcal{N}(v) = \{u | (v, u) \in E \}$ be the set of $v$'s neighbors. Given a subset of nodes $S \subseteq V$, we use $G[S]$ to denote the induced subgraph constructed by nodes in $S$, and $\mathbf{X}_{S}$ to denote the feature matrix describing the attribute information of nodes in $S$.

\subsection{Tokenization and Encoding}\label{sec:tokenization}
Tokenization, which is the process of mapping the graph into a sequence of tokens, is an inseparable part of adapting sequential encoders to graphs. As discussed earlier, existing methods use either node/edge~\citep{shirzad2023exphormer, GPS, kim2022pure}, or subgraph tokenization methods~\citep{chen2023nagphormer, zhao2021gophormer, graph-mlpmixer}, each of which has its own (dis)advantages. In this part, we present a new simple but flexible and effective neighborhood sampling for each node and discuss its advantages over existing subgraph tokenization. The main and high-level idea of our tokenization is to first, sample some subgraphs for each node that can represent the node's neighborhood structure as well as its local, and global positions in the graph. Then we vectorize (encode) these subgraphs to obtain the node representations.

\head{Neighborhood Sampling}
 Given a node $v \in V$, and two integers $m, M \geq 0$, for each $0 \leq \hat{m} \leq m$, we sample $M$ random walks started from $v$ with length $\hat{m}$. Let $T_{\hat{m}, i} (v)$ for $i = 0, \dots, M$ be the set of visited nodes in the $i$-th walk. We define the token corresponds to all walks with length $\hat{m}$ as:
 \begin{align}
     G\left[T_{\hat{m}}(v)\right] = G\left[\bigcup_{i = 0}^{M} T_{\hat{m}, i}(v)\right],
 \end{align}
 which is the union of all walks with length $\hat{m}$. One can interpret  $G[T_{\hat{m}}(v)]$ as the induced subgraph of a sample of $\hat{m}$-hop neighborhood of node $v$. At the end, for each node $v \in V$ we have the sequence of $G[T_0(v)], \dots, G[T_{m}(v)]$ as its corresponding tokens. 

Using random walks (with fixed length) or $k$-hop neighborhood of a node as its representative tokens has been discussed in several recent studies~\citep{ding2023recurrent, zhang2022hierarchical, chen2023nagphormer, zhao2021gophormer}. These methods, however, suffer from a subset of these limitations: (1) they use a fixed-length random walk~\citep{kuang2021coarformer}, which misses the hierarchical structure of the node's neighborhood. This is particularly important when the long-range dependencies of nodes are important. (2) they use all nodes in all $k$-hop neighborhoods~\citep{chen2023nagphormer, ding2023recurrent}, resulting in a trade-off between long-range dependencies and over-smoothing or over-squashing problems. Furthermore, the $k$-hop neighborhood of a well-connected node might be the whole graph, resulting in considering the graph as a token of a node, which is inefficient. Our neighborhood sampling approach addresses all these limitations. It sampled the fixed number of random walks with different lengths for all nodes, capturing hierarchical structure of the neighborhood while avoiding both inefficiency, caused by considering the entire graph, and over-smoothing and over squashing, caused by large neighborhood aggregation.

\noindent
\textcolor{c1}{\textbf{Why Not More Subgraphs}?}
As discussed earlier, empirical evaluation has shown that the performance of \emph{selective} state space models improves monotonically with sequence length~\citep{gu2023mamba}. Furthermore, their linear computational cost allow us to use more tokens, providing them more context. Accordingly, to fully take advantage of selective state space models, given an integer $s > 0$, we repeat the above neighborhood sampling process for $s$ times. Accordingly, for each node $v \in V$ we have a sequence of
\begin{align}\nonumber
    G[T_0(v)], \underset{s \:\text{times}}{\underbrace{G[T_1^{1}(v)], \dots, G[T_1^{s}(v)]}}, \dots, \underset{s \:\text{times}}{\underbrace{G[T_m^{1}(v)], \dots, G[T_m^{s}(v)]}}
\end{align}
as its corresponding sequence of tokens. Here, we can see another advantage of our proposed neighborhood sampling compared to \citet{chen2023nagphormer, ding2023recurrent}. While in NAGphormer~\citep{chen2023nagphormer} the sequence length of each node is limited by the diameter of the graph, our method can produce a long sequence of diverse subgraphs. 

\begin{theorem}\label{thm:neighborhood}
    With large enough $M, m,$ and $s > 0$, GMNs' neighborhood sampling is strictly more expressive than $k$-hop neighborhood sampling.
\end{theorem}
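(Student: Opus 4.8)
The plan is to split the claim into (A) a \emph{domination} part — GMN sampling is at least as expressive as $k$-hop sampling once $m\ge k$ and $M$ (and $s$) are large — and (B) a \emph{strictness} part. I would first fix what ``more expressive'' means for a randomized tokenization: a scheme $\mathcal{T}$ \emph{separates} two rooted graphs $(G_1,u_1),(G_2,u_2)$ if the induced distributions over multisets of isomorphism types of token subgraphs differ; $\mathcal{T}$ is \emph{at least as expressive as} $\mathcal{T}'$ if every pair separated by $\mathcal{T}'$ is separated by $\mathcal{T}$, and \emph{strictly more} if additionally some pair is separated by $\mathcal{T}$ but not $\mathcal{T}'$. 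Here ``$k$-hop neighborhood sampling'' is the deterministic scheme whose token for $v$ is the induced subgraph $G[B_k(v)]$ on the radius-$k$ ball $B_k(v)=\{x: d_G(v,x)\le k\}$ (the argument is unchanged if one instead keeps the whole hierarchy $G[B_1(v)],\dots,G[B_k(v)]$).

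For (A), the key lemma is that random walks recover balls in aggregate: for every fixed $\hat m\le m$ and every node $v$ of the (finite) input graph, $\Pr\big[\,T_{\hat m}(v)=B_{\hat m}(v)\,\big]\to 1$ as $M\to\infty$. I would prove this by noting that any $x\in B_{\hat m}(v)$ at distance $d\le \hat m$ is visited by the length-$\hat m$ walk that follows a shortest $v$--$x$ path and then oscillates along its last edge for the remaining $\hat m-d$ steps; such a walk is realized with probability at least $p_0:=(1/\Delta_{\max})^{\hat m}>0$, so $\Pr[x\notin T_{\hat m}(v)]\le(1-p_0)^M$, and a union bound over the at most $n$ vertices of $B_{\hat m}(v)$ (a coupon-collector estimate makes $M=\Theta(\log n)$ enough) gives the lemma. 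Consequently, for $M$ large the GMN token sequence equals, with probability arbitrarily close to $1$, the deterministic object $\big(G[B_0(v)],\,\underbrace{G[B_1(v)],\dots}_{s},\dots,\underbrace{G[B_m(v)],\dots}_{s}\big)$, from which $G[B_k(v)]$ is recoverable whenever $m\ge k$; hence any pair separated by $k$-hop sampling is separated by GMN sampling.

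For (B), I would construct, for the given $k$, finite rooted graphs $(G_1,u_1),(G_2,u_2)$ with $G[B_k(u_1)]\cong G[B_k(u_2)]$ but $G[B_{k+1}(u_1)]\not\cong G[B_{k+1}(u_2)]$: take a fixed tree-like gadget as the radius-$k$ ball and attach at a distance-$k$ boundary vertex two structurally different pendant gadgets (e.g.\ a pendant edge versus a pendant triangle, or pendant paths of different lengths) so the radius-$k$ balls coincide while the radius-$(k{+}1)$ balls do not. Then $k$-hop sampling returns the same token for $u_1,u_2$ and cannot separate them, whereas with $m\ge k+1$ and $M$ large, part (A) shows the GMN token sequence contains the non-isomorphic $G[B_{k+1}(u_i)]$, so GMN sampling separates the pair. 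Combined with (A), this yields strict inclusion. (If ``$k$-hop sampling'' is read as hop-wise \emph{feature aggregation} rather than retaining induced subgraphs, one already gets strictness at equal radius using a pair with identical per-hop vertex sets but different intra-hop edges, which the random-walk subgraph tokens distinguish.)

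The main obstacle I anticipate is the randomization rather than the combinatorics: one must (i) pin down a definition of expressiveness for randomized tokenizers under which ``with probability $\to 1$ the token multiset equals a fixed deterministic object'' cleanly implies domination, and (ii) make the uniform-over-vertices hitting-probability bound $p_0>0$ and the coupon-collector union bound precise enough to turn ``large enough $M$'' into an explicit threshold. A secondary point to state carefully is that the theorem permits $m$ to exceed the fixed $k$ — this is exactly what powers strictness in the first construction; without it, the gap must instead be extracted from the strictly richer information carried by a random-walk-induced subgraph compared to a hop aggregation.
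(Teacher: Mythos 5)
Your domination step (A) is essentially the paper's: the paper likewise fixes $m=k$, $s=1$, lower-bounds the probability $p_{u,v}>0$ of hitting any $v$ in the $k$-hop ball by a length-$k$ walk, and sends $(1-p_{u,v})^{M}\to 0$ so that the union of the $M$ walks recovers the full ball with probability $\ge 1-\epsilon$; your shortest-path-plus-oscillation bound and union bound just make the same argument explicit. Where you genuinely diverge is strictness. The paper keeps $m=k$ and instead drives $s\to\infty$: for any nonempty $S\subseteq B_k(u)$ realizable as a walk union with probability $q_S>0$, the chance of never producing $G[S]$ among $s$ repetitions is $(1-q_S)^{s}\to 0$, so the token sequence eventually contains (essentially) all nonempty induced subgraphs of the $k$-hop neighborhood, which the paper then asserts is ``more powerful than the neighborhood itself.'' You instead keep $s=1$, take $m\ge k+1$, and exhibit an explicit pair of rooted graphs with isomorphic radius-$k$ balls but non-isomorphic radius-$(k+1)$ balls. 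Each route buys something: yours produces a concrete, checkable separating instance and sidesteps the paper's weakest step --- at the level of raw information $G[B_k(u)]$ determines all of its induced subgraphs, so the claim that the bag of subgraphs is strictly stronger only makes sense relative to a bounded downstream encoder (a subgraph-bag/ESAN-style argument the paper leaves implicit) --- but it establishes strictness only by out-radiusing the baseline, which the theorem's quantifiers permit but which says less about the sampling scheme per se. The paper's route, if its last step were fleshed out, would give the stronger statement that strictness holds even at matched radius $m=k$. Your fallback remark (strictness at equal radius when ``$k$-hop sampling'' means hop-wise vertex-set aggregation) is well taken and is in fact the argument the paper itself deploys against GRED/S4G in its appendix, though it is not the reading used in this proof. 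One small caveat: state explicitly that $T_{\hat m}(v)\subseteq B_{\hat m}(v)$ holds deterministically (a length-$\hat m$ walk cannot leave the ball), so that the high-probability event really is equality of induced subgraphs and not just containment.
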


\noindent
\textcolor{c3}{\textbf{Structural/Positional Encoding}.}
To further augment our framework for Graph Mamba, we consider an optional step, when we inject structural and positional encodings to the initial features of nodes/edges. PE is meant to provide information about the position of a given node within the graph. Accordingly, two close nodes within a graph or subgraph are supposed to have close PE. SE, on the other hand, is meant to provide information about the structure of a subgraph. Following \citet{GPS}, we concatenate either eigenvectors of the graph Laplacian or Random-walk structural encodings to the nodes' feature, whenever PE/SE are needed: i.e., 
\begin{align}
    \mathbf{x}^{(\text{new})}_v = \mathbf{x}_v \: || \: p_v,
\end{align}
where $p_v$ is the corresponding positional encoding to $v$. For the sake of consistency, we use $\mathbf{x}_v$ instead of $\mathbf{x}^{\text{(new)}}_v$ throughout the paper.

\head{Neighborhood Encoding}
Given a node $v \in V$ and its sequence of tokens (subgraphs), we encode the subgraph via encoder $\phi(.)$. That is, we construct $\mathbf{x}_v^{1}, \mathbf{x}_v^{2}, \dots, \mathbf{x}_v^{ms-1}, \mathbf{x}_v^{ms} \in \R^{d}$ as follows:
\begin{align}\label{eq:token-encoding}
    \mathbf{x}_v^{\left((i-1)s + j\right)} = \phi\left(G[T^{j}_i(v)], \mathbf{X}_{T^{j}_i(v)} \right),
\end{align}
where $1 \leq i \leq m$ and $1 \leq j \leq s$. In practice, this encoder can be an MPNN, (e.g., Gated-GCN~\citep{bresson2017residual}), or RWF~\citep{nshoff2023walking} that encodes nodes with respect to a sampled set of walks into feature vectors with four parts: (1) node features, (2) edge features along the walk, and (3, 4) local structural information.

\head{Token Ordering}
By Equation~\ref{eq:token-encoding}, we can calculate the neighborhood embeddings for various sampled neighborhoods of a node and further construct a sequence to represent its neighborhood information, i.e., $\mathbf{x}_v^{1}, \mathbf{x}_v^{2}, \dots, \mathbf{x}_v^{ms-1}, \mathbf{x}_v^{ms}$. As discussed in \S\ref{sec:motivations}, adaption of sequence models like SSMs to graph-structured data requires an order on the tokens. To understand what constitutes a good ordering, we need to recall selection mechanism in Mamba~\citep{gu2023mamba} (we will discuss selection mechanism more formally in \S\ref{sec:bidirectional-mamba}). Mamba by making $\mathbf{B}, \mathbf{C},$ and $\boldsymbol{\Delta}$ as functions of input $x_t$ (see \S\ref{sec:ssm} for notations) lets the model filter irrelevant information and select important tokens in a recurrent manner, meaning that each token gets updated based on tokens that come before them in the sequence. Accordingly, earlier tokens have less information about the context of sequence, while later tokens have information about almost entire sequence. This leads us to order tokens based on either their needs of knowing information about other tokens or their importance to our task. 

\noindent
\textcolor{c1}{\underline{When $m \geq 1$}:} For the sake of simplicity first let $s=1$. In the case that $m \geq 1$, interestingly, our architecture design provides us with an implicitly ordered sequence. That is, given $v \in V$, the $i$-th token is a samples from $i$-hop neighborhood of node $v$, which is the subgraph of all $j$-hop neighborhoods, where $j \geq i$. This means, given a large enough $M$ (number of sampled random walks), our $T_{j}(v)$ has enough information about $T_i(v)$, not vice versa. To this end, we use the reverse of initial order, i.e., $\mathbf{x}_v^{m}, \mathbf{x}_v^{m - 1}, \dots, \mathbf{x}_v^{2}, \mathbf{x}_v^{1}$. Accordingly, inner subgraphs can also have information about the global structure. When $s \geq 2$, we use the same procedure as above, and reverse the initial order, i.e., $\mathbf{x}_v^{sm}, \mathbf{x}_v^{sm - 1}, \dots, \mathbf{x}_v^{2}, \mathbf{x}_v^{1}$. To make our model robust to the permutation of subgraphs with the same walk length $\hat{m}$, we randomly shuffle them. We will discuss the ordering in the case of $m = 0$ later.

\subsection{Bidirectional Mamba}\label{sec:bidirectional-mamba}
As discussed in \S\ref{sec:motivations}, SSMs are recurrent models and require ordered input, while graph-structured data does not have any order and needs permutation equivariant encoders. To this end, inspired by Vim in computer vision~\citep{zhu2024vision}, we modify Mamba architecture and use two recurrent scan modules to scan data in two different directions (i.e., forward and backward). Accordingly, given two tokens $t_i$ and $t_j$, where $i > j$ and indices show their initial order, in forward scan $t_i$ comes after $t_j$ and so has the information about $t_j$ (which can be flown into the hidden states or filtered by the selection mechanism). In backward pass $t_j$ comes after $t_i$ and so has the information about $t_i$. This architecture is particularly important when $m = 0$ (node tokenization), which we will discuss later. 

More formally, in forward pass module, let $\boldsymbol{\Phi}$ be the input sequence (e.g., given $v$, $\boldsymbol{\Phi}$ is a matrix whose rows are $\mathbf{x}_v^{sm}, \mathbf{x}_v^{sm - 1}, \dots,\mathbf{x}_v^{1}$, calculated in Equation~\ref{eq:token-encoding}), $\mathbf{A}$ be the relative positional encoding of tokens, we have:
\begin{align} \nonumber
    &\boldsymbol{\Phi}_{\text{input}} = \sigma\left(\texttt{Conv}\left( \mathbf{W}_{\text{input}} \: \texttt{LayerNorm}\left( \boldsymbol{\Phi}\right) \right)\right), \\ \nonumber
    &\mathbf{B} = \mathbf{W}_{\textbf{B}} \: \boldsymbol{\Phi}_{\text{input}}\:, \quad \mathbf{C} = \mathbf{W}_{\textbf{C}} \: \boldsymbol{\Phi}_{\text{input}}\:, \quad \boldsymbol{\Delta} = \texttt{Softplus}\left( \mathbf{W}_{\Delta} \: \boldsymbol{\Phi}_{\text{input}}\right),\\ \nonumber
    &\bar{\mathbf{A}} = \texttt{Discrete}_{\mathbf{A}}\left(\mathbf{A}, \boldsymbol{\Delta}  \right),\\ \nonumber
    &\bar{\mathbf{B}} = \texttt{Discrete}_{\mathbf{B}}\left(\mathbf{B}, \boldsymbol{\Delta}  \right),\\ \nonumber
    &\boldsymbol{y} = \texttt{SSM}_{\bar{\mathbf{A}}, \bar{\mathbf{B}}, \mathbf{C} }\left( \boldsymbol{\Phi}_{\text{input}} \right),\\ 
    &\boldsymbol{y}_{\text{forward}} = \mathbf{W}_{\text{forward}, 1}\left(\boldsymbol{y}  \odot  \sigma\left( \mathbf{W}_{\text{forward}, 2} \: \texttt{LayerNorm}\left( \boldsymbol{\Phi}  \right) \right) \right),
\end{align}
 where $\mathbf{W}, \mathbf{W}_{\mathbf{B}}, \mathbf{W}_{\mathbf{C}}, \mathbf{W}_{\boldsymbol{\Delta}}, \mathbf{W}_{\text{forward}, 1}$ and $\mathbf{W}_{\text{forward}, 2}$ are learnable parameters, $\sigma(.)$ is nonlinear function (e.g., \texttt{SiLU}), $\texttt{LayerNorm}(.)$ is layer normalization~\citep{layer-normalization}, $\texttt{SSM}(.)$ is the state space model discussed in Equations \ref{eq:ssm1} and \ref{eq:ssm2}, and $\texttt{Discrete}(.)$ is discretization process discussed in Equation~\ref{eq:ssm_disc}. We use the same architecture as above for the backward pass (with different weights) but instead we use $\boldsymbol{\Phi}_{\text{inverse}}$ as the input, which is a matrix whose rows are $\mathbf{x}_v^{1}, \mathbf{x}_v^{2}, \dots,\mathbf{x}_v^{sm}$. Let $\boldsymbol{y}_{\text{backward}}$ be the output of this backward module, we obtain the final encodings as 
 \begin{align}\label{eq:node-encodings}
     \boldsymbol{y}_{\text{output}} = \mathbf{W}_{\text{out}} \left( \boldsymbol{y}_{\text{forward}} + \boldsymbol{y}_{\text{backward}} \right).
 \end{align}
 In practice, we stack some layers of the bidirectional Mamba to achieve good performance. Note that due to our ordering mechanism, the last state of the output corresponds to the walk with length $\hat{m} = 0$, i.e., the node itself. Accordingly, the last state represents the updated node encoding.

\noindent
\textcolor{c3}{\textbf{Augmentation with MPNNs}.} We further use an optional MPNN module that simultaneously performs message-passing and augments the output of the bidirectional Mamba via its inductive bias. Particularly this module is very helpful when there are rich edge features and so an MPNN can help to take advantage of them. While in our empirical evaluation we show that this module is not necessary for the success of GMNs in several cases, it can be useful when we avoid complex PE/SE and strong inductive bias is needed.

\noindent
\textcolor{c1}{\textbf{How Does Selection Work on Subgraphs}?} 
As discussed earlier, the selection mechanism can be achieved by making $\mathbf{B}, \mathbf{C},$ and $\boldsymbol{\Delta}$ as the functions of the input data~\citep{gu2023mamba}. Accordingly, in recurrent scan, based on the input, the model can filter the irrelevant context. The selection mechanism in Equation~\ref{eq:node-encodings} is implemented by making $\mathbf{B}, \mathbf{C},$ and $\boldsymbol{\Delta}$ as functions of $\boldsymbol{\Phi}_{\text{input}}$, which is matrix of the encodings of neighborhoods. Therefore, as model scans the sampled subgraphs from the $i$-hop neighborhoods in descending order of $i$, it filters irrelevant neighborhoods to the context (last state), which is the node encoding.

\head{Last Layer(s) of Bidirectional Mamba}
To capture the long-range dependencies and to flow information across the nodes, we use the node encodings obtained from the last state of Equation~\ref{eq:node-encodings} as the input of the last layer(s) of bidirectional Mamba. Therefore, the recurrent scan of nodes (in both directions) can flow information across nodes. This design not only helps capturing long-range dependencies in the graph, but it also is a key to the flexibility of our framework to bridge node and subgraph tokenization.

\subsection{Tokenization When $m = 0$}
In this case, for each node $v \in V$ we only consider $v$ itself as its corresponding sequence of tokens. Based on our architecture, in this case, the first layers of bidirection Mamba become simple projection as the length of the sequence is one. However, the last layers, where we use node encodings as their input, treats nodes as tokens and become an architecture that use a sequential encoder (e.g., Mamba) with node tokenization. More specifically, in this special case of framework, the model is the adaption of GPS~\citep{GPS} framework, when we replace its Transformer with our bidirectional Mamba. 

This architecture design lets switching between node and subgraph tokenization using a single parameter $m$, making the choice of tokenization a tunable hyperparameter during training. Note that this flexibility comes more from our architecture rather than the method of tokenization. That is, in practice one can use only $0$-hop neighborhood in NAGphormer~\citep{chen2023nagphormer}, resulting in only considering the node itself. However, in this case, the architecture of NAGphormer becomes a stack of MLPs, resulting in poor performance. 

\head{Token Ordering} \textcolor{c1}{\underline{When $m = 0$}:} One remaining question is how one can order nodes when we use node tokenization. As discussed in \S\ref{sec:tokenization}, tokens need to be ordered based on either (1) their needs of knowing information about other tokens or (2) their importance to our task. When dealing with nodes and specifically when long-range dependencies matter, (1) becomes a must for all nodes. Our architecture overcomes this challenge by its bidirectional scan process. Therefore, we need to order nodes based on their importance. There are several metrics to measure the importance of nodes in a graph. For example, various centrality measures~\citep{latora2007measure, ruhnau2000eigenvector}, degree, $k$-core~\citep{k-core-first, FirmCore}, Personalized PageRank or PageRank~\citep{page1998pagerank}, etc. In our experiments, for the sake of efficiency and simplicity, we sort nodes based on their degree. 

\noindent
\textcolor{c1}{\textbf{How Does Selection Work on Nodes}?}
Similar to selection mechanism on subgraphs, the model based on the input data can filter irrelevant tokens (nodes) to the context (downstream tasks).

\subsection{Theoretical Analysis of GMNs}
In this section, we provide theoretical justification for the power of GMNs. More specifically, we first show that GMNs are universal approximator of any function on graphs. Next, we discuss that given proper PE and enough parameters, GMNs are more powerful than any WL isomorphism test, matching GTs (with the similar assumptions). Finally, we evaluate the expressive power of GMNs when they do not use any PE or MPNN and show that their expressive power is unbounded (might be incomparable).

\begin{theorem}[Universality]
    Let $1 \leq p < \infty$, and $\epsilon > 0$. For any continues function $f: [0, 1]^{d \times n} \rightarrow \R^{d \times n}$ that is permutation equivariant, there exists a GMN with positional encoding, $g_p$, such that $\ell^p(f, g) < \epsilon$ \footnote{$\ell^p(.)$ is the $p$-norm}. 
\end{theorem}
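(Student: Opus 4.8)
The plan is to follow the now-standard three-stage template for universal approximation of permutation-equivariant sequence-to-sequence maps — quantization, contextual mapping, value mapping — and to realize each stage with the components of a GMN in the node-tokenization regime ($m=0$), where the stack of last-layer bidirectional Mamba blocks acts directly on the $n$ node tokens and is the only place cross-token information mixes.

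\emph{Step 1 (Quantization).} Since $[0,1]^{d\times n}$ is compact and $f$ is continuous, it is uniformly continuous, so for any $\epsilon>0$ there is a resolution $\delta = 1/R$ such that the piecewise-constant map $\bar f$ — obtained by snapping every column of the input to the grid $\mathbb{G}_\delta = \{0,\delta,\dots,1-\delta\}^{d}$ and outputting the value of $f$ at that grid point — satisfies $\ell^p(f,\bar f) < \epsilon/2$. Because $f$ is permutation equivariant and the snapping acts column-wise, $\bar f$ is permutation equivariant as well, and it takes only finitely many values; so it suffices to approximate $\bar f$ to within $\epsilon/2$ by a GMN.

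\emph{Step 2 (Column-wise quantization inside the GMN).} I would first realize the column-wise snapping map $[0,1]^{d}\to\mathbb{G}_\delta$, which is token-wise and hence approximable to arbitrary accuracy by a pointwise MLP; such an MLP is available in a GMN either via the optional MPNN augmentation on the edgeless graph (message passing is the identity, the update is an MLP) or via a bidirectional Mamba block whose weights make its per-token gated output reproduce a two-layer network. Concatenating the positional encoding $p_v$ (here just distinct slot markers — an index encoding, or a generic Laplacian/random-walk encoding with pairwise-distinct entries) leaves, after Step 2, a sequence of tokens $(z_i)_{i=1}^{n}$ in which $z_i$ carries both the quantized column and a unique identifier of slot $i$.

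\emph{Step 3 (Contextual mapping — the main obstacle).} The crux is to build a block of bidirectional Mamba layers implementing a contextual map $q$: a token-wise function such that (i) for every fixed quantized input, $i\mapsto q(\cdot)_i$ is injective, and (ii) for two distinct quantized inputs the ranges of their contextual maps are disjoint. I would realize this with a selective SSM configured \emph{without} data-dependent decay — legitimate, since selectivity is optional — by fixing $\bar{\mathbf A}$ to a scalar $1/b$ with $b$ exceeding the number of quantization levels and choosing $\bar{\mathbf B}, \mathbf C$ so that the forward scan computes the running weighted sum $\sum_{j\le i} b^{-\pi(j)}\,\mathrm{enc}(z_j)$, where $\pi$ is read off from the positional encoding; combining the forward and backward outputs with a final readout state (as in Eq.~\ref{eq:node-encodings}) yields at every token the full base-$b$ expansion of the entire quantized sequence, a real number uniquely identifying the multiset of columns together with their slots, and adding to token $i$ a small, separated perturbation depending only on $p_i$ makes $q$ injective across slots. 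The hard part, which I expect to dominate the technical work, is verifying that the discretized recurrence together with its \texttt{SiLU} gate, \texttt{Conv}, and \texttt{LayerNorm} can be tuned to produce these affine running sums on the relevant compact input set (driving the nonlinearities into near-linear regimes or cancelling them through the residual/gating structure), and that the bidirectional combination preserves rather than destroys injectivity.

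\emph{Step 4 (Value mapping and bookkeeping).} Once every token carries a unique bounded identifier of the whole quantized input and of its own slot, a final pointwise MLP — again supplied by a GMN block as in Step 2 — maps that identifier to the target column $\bar f(\cdot)_i$; since the identifiers are finitely many and separated, this lookup is realizable to accuracy $\epsilon/2$ by a sufficiently wide network. Composing the three blocks gives a GMN $g_p$ with $\ell^p(f,g_p)\le \ell^p(f,\bar f)+\ell^p(\bar f,g_p) < \epsilon$. I would close with the remark that using a fixed scan order is consistent with $f$ being permutation equivariant — approximating $f$ uniformly on $[0,1]^{d\times n}$ already forces $g_p$ to be approximately equivariant — and that the bidirectional scan is precisely what lets the contextual map see the whole sequence from every token irrespective of direction.
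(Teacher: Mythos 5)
Your proposal is a correct high-level strategy, but it takes a genuinely different route from the paper. The paper's proof is two citations deep: it invokes the result of \citet{wang2023statespace} that SSMs with layer-wise nonlinearity are universal approximators of sequence-to-sequence functions as a black box, sets $m=0$ so that nodes are tokens, and then lifts sequence universality to graph universality exactly as \citet{kreuzer2021rethinking} do for Transformers, using the padded adjacency matrix as a ``strong'' positional encoding. You instead re-derive the sequence-to-sequence universality from scratch for the concrete bidirectional-Mamba architecture, via the quantization / contextual-mapping / value-mapping template, realizing the contextual map as a base-$b$ running sum inside the selective scan and using the forward--backward combination so every token sees the whole sequence. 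What the paper's route buys is brevity and modularity; what it costs is that it never checks that the cited universality result actually applies to the architecture defined in \S\ref{sec:bidirectional-mamba} (with \texttt{Conv}, \texttt{LayerNorm}, \texttt{SiLU} gating, and input-dependent $\mathbf{B},\mathbf{C},\boldsymbol{\Delta}$), nor that the last-layer readout exposes per-token outputs rather than only the final state. Your route buys exactly that architecture-specific verification and makes explicit where positional information enters, at the cost of substantially more work. One substantive difference to flag: the paper's positional encoding is the padded adjacency matrix, so the approximated function may depend on the graph structure; your slot-marker encoding only handles functions of the feature matrix, which matches the literal statement but not the use the paper makes of this theorem in its Expressive Power corollary.

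The one place where your argument is not yet a proof is the step you yourself flag: showing that the discretized selective recurrence, composed with the convolution, layer normalization, and gating, can be tuned to implement the required affine running sums, and that the forward and backward outputs combine injectively rather than cancelling. This is precisely the content the paper outsources to \citet{wang2023statespace}, so until you discharge it (or restrict to a sub-architecture for which it is known), your construction and the paper's citation carry the same unverified obligation; everything else in your outline is standard and sound.
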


\begin{theorem}[Expressive Power w/ PE]
    Given the full set of eigenfunctions and enough parameters, GMNs can distinguish any pair of non-isomorphic graphs and are more powerful than any WL test.
\end{theorem}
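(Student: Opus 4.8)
The plan is to reduce the statement to the Universality theorem proved just above, by exhibiting a single permutation-equivariant target function that, when fed the full spectral positional encoding, is a complete isomorphism invariant. Fix the number of vertices $n$ and let the positional encoding attach to each node $v$ the entries $u_1(v),\dots,u_n(v)$ of an orthonormal eigenbasis of the graph Laplacian $L$ together with the eigenvalues $\lambda_1,\dots,\lambda_n$ (these are exactly the per-node slices of the spectral data discussed in \S\ref{sec:tokenization}). From this data the Laplacian is recovered exactly via $L_{uv}=\sum_{i=1}^{n}\lambda_i\,u_i(u)\,u_i(v)$, hence so are the adjacency matrix and the graph itself. Thus there is a permutation-equivariant map $\Psi$ from PE-augmented node-feature matrices to a complete graph invariant: $\Psi$ first reconstructs $L$, then broadcasts to every node the canonical form (or any complete invariant) of the reconstructed graph.

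First I would make $\Psi$ continuous so that the Universality theorem applies: since there are only finitely many $n$-vertex graphs, the set of attainable PE-augmented inputs is a finite, hence closed, subset of the compact cube $[0,1]^{d\times n}$, on which $\Psi$ is trivially continuous; by the Tietze extension theorem it extends to a continuous function on all of $[0,1]^{d\times n}$, and averaging this extension over $S_n$ restores permutation equivariance while leaving $\Psi$ unchanged on its original domain. Now apply the Universality theorem with $\epsilon$ chosen strictly smaller than half the minimum $\ell^p$-distance between the values $\Psi$ assigns to non-isomorphic graphs: the resulting GMN with PE outputs, on every $n$-vertex graph, a value within $\epsilon$ of $\Psi$, and by the choice of $\epsilon$ it therefore separates every pair of non-isomorphic $n$-vertex graphs. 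Graphs of different sizes are separated trivially (e.g. through the output dimension, or by including $n$ as a feature), so GMNs with the full set of eigenfunctions distinguish every pair of non-isomorphic graphs.

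Strict superiority over the Weisfeiler--Lehman hierarchy is then immediate: for every fixed $k$, the $k$-dimensional WL test fails on some pair of non-isomorphic graphs (the Cai--F\"urer--Immerman graphs), whereas the construction above distinguishes \emph{all} non-isomorphic graphs, so in particular it separates that pair while still separating everything $k$-WL separates. This is precisely the sense in which Graph Transformers with spectral PE are known to exceed the WL hierarchy, so GMNs match them.

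The main obstacle is the eigenvector ambiguity: when $L$ has repeated eigenvalues the eigenbasis $\{u_i\}$ is pinned down only up to an orthogonal change of basis within each eigenspace (and up to a sign even for simple eigenvalues), so the raw PE is not a function of $G$ alone and the ``recover $L$'' step must be made basis-robust. I would handle this either by supplying basis-invariant spectral data, namely the distinct eigenvalues $\mu$ together with the eigenspace projectors $P_\mu=\sum_{i:\lambda_i=\mu}u_i u_i^{\top}$, from which $L=\sum_\mu \mu\,P_\mu$ is recovered unambiguously; or by noting that the Universality theorem only asks us to approximate \emph{some} equivariant target, so one may define $\Psi$ on a fixed canonical choice of eigenbasis and then take its symmetrization over the sign/basis group, which the bidirectional scan (designed to realize equivariant functions of its input) can be driven to. Checking that this symmetrized target still satisfies the hypotheses of the Universality theorem — continuity, domain $[0,1]^{d\times n}$, permutation equivariance — is the one place where genuine care is needed; the rest is bookkeeping.
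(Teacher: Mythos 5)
Your proposal is correct and takes essentially the same route as the paper: both reduce the claim to the Universality theorem by exhibiting a permutation-equivariant target function that, given a positional encoding strong enough to determine the graph, acts as a complete isomorphism invariant. The only difference is that the paper shortcuts by taking the padded adjacency matrix itself as the positional encoding, whereas you reconstruct $L$ (hence the adjacency matrix) from the eigenvalue/eigenvector data and additionally fill in the continuity, eigenbasis-ambiguity, and separation-margin details that the paper's two-line proof leaves implicit.
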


We prove the above two theorems based on the recent work of  \citet{wang2023statespace}, where they prove that SSMs with layer-wise nonlinearity are universal approximators of any sequence-to-sequence function.   

\begin{theorem}[Expressive Power w/o PE and MPNN]\label{thm:unbounded}
    With enough parameters,
    for every $k \geq 1$ there are graphs that are distinguishable by GMNs, but not by $k$-WL test, showing that their expressive power is not bounded by any WL test.
\end{theorem}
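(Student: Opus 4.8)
The plan is to instantiate the neighborhood encoder $\phi$ as a random-walk feature encoder (RWF) of the CRaWl type~\citep{nshoff2023walking} and then to inherit the known incomparability of such encoders with the WL hierarchy. Recall that CRaWl-style features record, along a walk $u_0 = v, u_1, \dots, u_\ell$, not only node and edge features but also the adjacency pattern among the nodes inside a sliding window of the walk; it is known that for every $k$ there is a pair of non-isomorphic graphs $G_k, H_k$ that $k$-WL fails to separate (for instance a Cai--F\"urer--Immerman pair, or a pair differing only in the count of a fixed substructure of treewidth exceeding $k$, using the homomorphism-count characterization of $k$-WL) yet that a CRaWl encoder with sufficiently long walks does separate. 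The goal is thus to show that a GMN \emph{without} PE and \emph{without} the optional MPNN module reproduces this separation using only its random-walk tokenization, its RWF encoder, and the bidirectional Mamba aggregator, so that the regime of the statement (no PE, no MPNN) is exactly the regime in which CRaWl's power is available.

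First I would fix $k$, pick such a pair $G_k, H_k$, and let $\ell_k$ and $w_k$ be a walk length and window size witnessing the CRaWl separation. Then I would choose the GMN hyperparameters so that $m \ge \ell_k$ (hence walks of length $\ell_k$ appear among the sampled tokens $G[T_0(v)], \dots, G[T_m(v)]$), take the number of walks $M$ large and $s \ge 1$, and set $\phi$ to the CRaWl encoder with window $w_k$ applied to the length-$\ell_k$ walks that generated the tokens $G[T_{\ell_k, i}(v)]$. With this choice, the per-node token sequence contains, among its walk-length-$\ell_k$ entries, the CRaWl embeddings of $M$ random walks from $v$. Because CRaWl separates $G_k$ from $H_k$, some walk pattern has a different probability under a length-$\ell_k$ random walk in the two graphs; for $M$ large the empirical multiset of patterns concentrates around this graph-dependent expectation, so the token sequences fed to the Mamba encoder differ (with probability tending to $1$) between $G_k$ and $H_k$. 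Finally, by the universality of deep selective SSMs with layer-wise nonlinearity~\citep{wang2023statespace} --- the same ingredient used in our Universality theorem --- a stack of bidirectional Mamba blocks of sufficient width approximates any continuous permutation-equivariant map of the token sequences, in particular one that is invariant to the shuffling we apply to equal-length tokens and that pools the walk-pattern statistics into a graph-level scalar $r(G)$ with $r(G_k) \ne r(H_k)$. Since $k$ was arbitrary, this proves the claim.

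The main obstacle is the mismatch between the \emph{randomness} of the walk sampling and the \emph{deterministic} notion of ``distinguishable'': CRaWl's separation holds in expectation / with high probability over random walks, whereas we need a single finite parameter setting whose outputs differ on $G_k$ and $H_k$. I see two routes, and would write out one of them carefully. The first is to let $M \to \infty$, so that empirical pattern frequencies converge to their (distinct) true expectations and the Mamba readout computes an approximation of these frequencies. The second, cleaner for bounded width, is to note that a single length-$\ell_k$ walk is a witness with probability $p_k > 0$, hence with $M$ large a witness appears with probability at least $1 - (1 - p_k)^M$, and a threshold-type readout that merely fires on the presence of the witness pattern already suffices. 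A secondary technicality is to check that the degree-based token ordering and the shuffling of equal-length tokens are harmless: both act identically on $G_k$ and $H_k$ at the distributional level and the target readout is built to be shuffle-invariant. Finally, ``enough parameters'' in the statement should be read as referring jointly to the walk length $\ell_k$, the window $w_k$, the sample sizes $m, M, s$, the width of $\phi$, and the hidden dimension of the SSM blocks.
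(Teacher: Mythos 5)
Your proposal is correct and follows essentially the same route as the paper: both reduce the claim to the CRaWl result of \citet{nshoff2023walking} by instantiating $\phi$ as the RWF encoder and inheriting, for each $k$, the known graph pairs that $k$-WL cannot separate but walk-based features can. The only minor divergence is in the final step---the paper argues that the bidirectional Mamba block can simulate the 1-d CNN that CRaWl places on top of RWF, whereas you invoke SSM universality together with a concentration (or single-witness) argument over the sampled walks; both close the same gap, and your handling of the sampling randomness is more explicit than the paper's.
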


We prove the above theorem based on the recent work of \citet{nshoff2023walking}, where they prove a similar theorem for CRaWl~\citep{nshoff2023walking}. Note that this theorem does not rely on the Mamba's power, and the expressive power comes from the choice of neighborhood sampling and encoding.

\section{Experiments}\label{sec:experiments}
In this section, we evaluate the performance of GMNs in long-range, small-scale, large-scale, and heterophilic benchmark datasets. We further discuss its memory efficiency and perform ablation study to validate the contribution of each architectural choice. The detailed statistics of datasets and additional experiments are available in the appendix.

\subsection{Experimental Setup}

\head{Dataset}
We use three most commonly used benchmark datasets with long-range, small-scale, large-scale, and heterophilic properties. For long-range datasets, we use Longe Range Graph Benchmark (LRGB) dataset~\citep{long-range-data}. For small and large-scale datasets, we use GNN benchmark~\citep{dwivedi2023benchmarking}. To evaluate the GMNs on heterophilic graphs, we use four heterophilic datasets from the work of \citet{platonov2023a}. Finally, we use a large dataset from Open Graph Benchmark~\citep{hu2020open}. We evaluate the performance of GMNs on various graph learning tasks (e.g., graph classification, regression, node classification and link classification). Also, for each datasets we use the propose metrics in the original benchmark and report the metric across multiple runs, ensuring the robustness. We discuss datasets, their statistics and their tasks in Appendix~\ref{app:data}. 

\head{Baselines}
We compare our GMNs with (1) MPNNs, e.g., GCN~\citep{kipf2016semi}, GIN~\citep{xu2018how}, and Gated-GCN~\citep{bresson2017residual}, (2) Random walk based method CRaWl~\citep{nshoff2023walking}, (3) state-of-the-art GTs, e.g., SAN~\citep{kreuzer2021rethinking}, NAGphormer~\citep{chen2023nagphormer}, Graph ViT~\citep{graph-mlpmixer}, two variants of GPS~\citep{GPS}, GOAT~\citep{kong2023goat}, and Exphormer~\citep{shirzad2023exphormer}, and (4) our baselines (i) GPS + Mamba: when we replace the transformer module in GPS with bidirectional Mamba. (ii) GMN-: when we do not use PE/SE and MPNN. The details of baselines are in Appendix~\ref{app:experimental-setup}.

\begin{table*}
    \centering
    \caption{Benchmark on Long-Range Graph Datasets~\citep{long-range-data}. Highlighted are the top \first{first}, \second{second}, and \third{third} results.}
    \resizebox{0.67\linewidth}{!}{
    \begin{tabular}{l  c  c  c  c }
    \toprule
        \multirow{2}{*}{\textbf{Model}} & \textbf{COCO-SP}  & \textbf{PascalVOC-SP} & \textbf{Peptides-Func}  & \textbf{Peptides-Struct} \\
         & F1 score $\uparrow$&  F1 score $\uparrow$ & AP $\uparrow$ & MAE $\downarrow$ \\
         \midrule
         \midrule
         GCN & $0.0841_{\pm 0.0010}$ & $0.1268_{\pm 0.0060}$ & $0.5930_{\pm 0.0023}$ & $0.3496_{\pm 0.0013}$ \\
         GIN & $0.1339_{\pm 0.0044}$ & $0.1265_{\pm 0.0076}$ & $0.5498_{\pm 0.0079}$ & $0.3547_{\pm 0.0045}$ \\
         Gated-GCN & $0.2641_{\pm 0.0045}$ & $0.2873_{\pm 0.0219}$ & $0.5864_{\pm 0.0077}$ & $0.3420_{\pm 0.0013}$ \\
         \midrule
         CRaWl & $0.3219_{\pm 0.00106}$ & {$0.4088_{\pm 0.0079}$} & \second{$0.6963_{\pm 0.0079}$} & {$0.2506_{\pm 0.0022}$} \\
         \midrule
         SAN+LapPE & $0.2592_{\pm 0.0158}$ & $0.3230_{\pm 0.0039}$ & $0.6384_{\pm 0.0121}$ & $0.2683_{\pm 0.0043}$ \\
         NAGphormer & $0.3458_{\pm 0.0070}$ & $0.4006_{\pm 0.0061}$ & - & - \\
         Graph ViT & - & - & {$0.6855_{\pm 0.0049}$} & \first{$0.2468_{\pm 0.0015}$}\\
         GPS & \third{$0.3774_{\pm 0.0150}$} & $0.3689_{\pm 0.0131}$ & $0.6575_{\pm 0.0049}$ & $0.2510_{\pm 0.0015}$\\
         GPS (BigBird) & $0.2622_{\pm 0.0008}$ & $0.2762_{\pm 0.0069}$ & $0.5854_{\pm 0.0079}$ & $0.2842_{\pm 0.0130}$\\
         Exphormer & $0.3430_{\pm 0.0108}$ & $0.3975_{\pm 0.0037}$ & $0.6527_{\pm 0.0043}$ &  \third{$0.2481_{\pm 0.0007}$} \\
         \midrule
         GPS + Mamba & \second{$0.3895_{\pm 0.0125}$} & \second{$0.4180_{\pm 0.012}$} & $0.6624_{\pm 0.0079}$ & $0.2518_{\pm 0.0012}$ \\
         GMN- & $0.3618_{\pm 0.0053}$ & \third{$0.4169_{\pm 0.0103}$} & \third{$0.6860_{\pm 0.0012}$} & $0.2522_{\pm 0.0035}$ \\
         GMN & \first{$0.3974_{\pm 0.0101}$} & \first{$0.4393_{\pm 0.0112}$} & \first{$0.7071_{\pm 0.0083}$} & \second{$0.2473_{\pm 0.0025}$} \\
    \toprule
    \end{tabular}
    \label{tab:LRGD}
    }
\end{table*}

\subsection{Long Range Graph Benchmark}
Table~\ref{tab:LRGD} reports the results of GMNs and baselines on long-range graph benchmark. GMNs consistently outperform baselines in all datasets that requires long-range dependencies between nodes. The reason for this superior performance is three folds: (1) GMNs based on our design use long sequence of tokens to learn node encodings and then use another selection mechanism to filter irrelevant nodes. The provided long sequence of tokens enables GMNs to learn long-range dependencies, without facing scalability or over-squashing issues. (2) GMNs using their selection mechanism are capable of filtering the neighborhood around each node. Accordingly, only informative information flows into hidden states. (3) The random-walk based neighborhood sampling allow GMNs to have diverse samples of neighborhoods, while capturing the hierarchical nature of $k$-hop neighborhoods. Also, it is notable that GMN consistently outperforms our baseline GPS + Mamba, which shows the importance of paying attention to the new challenges. That is, replacing the transformer module with Mamba, while improves the performance, cannot fully take advantage of the Mamba traits. Interestingly, GMN-, a variant of GMNs without Transformer, MPNN, and PE/SE that we use to evaluate the importance of these elements in achieving good performance, can achieve competitive performance with other complex methods, showing that while Transformers, complex message-passing, and SE/PE are sufficient for good performance in practice, neither is necessary.

\subsection{Comparison on GNN Benchmark}
We further evaluate the performance of GMNs in small and large datasets from the GNN benchmark. The results of GMNs and baseline performance are reported in Table~\ref{tab:GNN}. GMN and Exphormer achieve competitive performance each outperforms the other two times. On the other hand again, GMN consistently outperforms GPS + Mamba baseline, showing the importance of designing a new framework for GMNs rather then using existing frameworks of GTs.  

\begin{table}
    \centering
    \caption{Benchmark on GNN Benchmark Datasets~\citep{dwivedi2023benchmarking}. Highlighted are the top \first{first}, \second{second}, and \third{third} results.}
    \resizebox{0.65\linewidth}{!}{
    \begin{tabular}{l  c  c   c  c }
    \toprule
        \multirow{2}{*}{\textbf{Model}} & \textbf{MNIST}  & \textbf{CIFAR10} & \textbf{PATTERN}  & \textbf{MalNet-Tiny} \\
         & Accuracy $\uparrow$& Accuracy $\uparrow$ & Accuracy $\uparrow$ & Accuracy $\uparrow$\\
         \midrule
         \midrule
         GCN & $0.9071_{\pm 0.0021}$ & $0.5571_{\pm 0.0038}$ & $0.7189_{\pm 0.0033}$ & $0.8100_{\pm 0.0000}$ \\
         GIN & $0.9649_{\pm 0.0025}$ & $0.5526_{\pm 0.0152}$ & $0.8539_{\pm 0.0013}$ & $0.8898_{\pm 0.0055}$ \\
         Gated-GCN & $0.9734_{\pm 0.0014}$ & $0.6731_{\pm 0.0031}$ & $0.8557_{\pm 0.0008}$ & $0.9223_{\pm 0.0065}$ \\
         \midrule
         CRaWl & $0.9794_{\pm 0.050}$ & $0.6901_{\pm 0.0259}$ & - & - \\
         \midrule
         NAGphormer & - & - & $0.8644_{\pm 0.0003}$ & - \\
         GPS & $0.9811_{\pm 0.0011}$ & $0.7226_{\pm 0.0031}$ & \third{$0.8664_{\pm 0.0011}$} & $0.9298_{\pm 0.0047}$\\
         GPS (BigBird) & $0.9817_{\pm 0.0001}$ & $0.7048_{\pm 0.0010}$ & $0.8600_{\pm 0.0014}$ & $0.9234_{\pm 0.0034}$\\
         Exphormer & \first{$0.9855_{\pm 0.0003}$} & \second{$0.7469_{\pm 0.0013}$} & \second{$0.8670_{\pm 0.0003}$} &  \second{$0.9402_{\pm 0.0020}$}\\
         \midrule
         GPS + Mamba & \third{$0.9821_{\pm 0.0004}$} & \third{$0.7341_{\pm 0.0015}$} & $0.8660_{\pm 0.0007}$ & \third{$0.9311_{\pm 0.0042}$} \\
         GMN & \second{$0.9839_{\pm 0.0018}$} & \first{$0.7576_{\pm 0.0042}$} & \first{$0.8714_{\pm 0.0012}$} & \first{$0.9415_{\pm 0.0020}$} \\
    \toprule
    \end{tabular}
    \label{tab:GNN}
    }
\end{table}

\vspace{-2ex}
\subsection{Heterophilic Datasets}
To evaluate the performance of GMNs on the heterophilc data as well as evaluating their robustness to over-squashing and over-smoothing, we compare their performance with the state-of-the-art baselines and report the results in Table~\ref{tab:heterophilic}. Our GMN outperforms baselines in 3 out of 4 datasets and achieve the second best result in the remaining dataset. These results show that the selection mechanism in GMN can effectively filter irrelevant information and also consider long-range dependencies in heterophilic datasets.

\begin{table}
    \centering
    \caption{Benchmark on heterophilic datasets~\citep{platonov2023a}. Highlighted are the top \first{first}, \second{second}, and \third{third} results.}
    \resizebox{0.70\linewidth}{!}{
    \begin{tabular}{l  c  c  c  c  c }
    \toprule
        \multirow{2}{*}{\textbf{Model}} & \textbf{Roman-empire}  & \textbf{Amazon-ratings} & \textbf{Minesweeper}  & \textbf{Tolokers}\\
         & Accuracy $\uparrow$& Accuracy $\uparrow$ & ROC AUC $\uparrow$ & ROC AUC $\uparrow$\\
         \midrule
         \midrule
         GCN & $0.7369_{\pm 0.0074}$ & $0.4870_{\pm 0.0063}$ & $0.8975_{\pm 0.0052}$ & $0.8364_{\pm 0.0067}$ \\
         Gated-GCN & $0.7446_{\pm 0.0054}$ & $0.4300_{\pm 0.0032}$ & $0.8754_{\pm 0.0122}$ & $0.7731_{\pm 0.0114}$ \\
         \midrule
         NAGphormer & $0.7434_{\pm 0.0077}$ & $0.5126_{\pm 0.0072}$ & $0.8419_{\pm 0.0066}$ & $0.7832_{\pm 0.0095}$ \\
         GPS & $0.8200_{\pm 0.0061}$ & \third{$0.5310_{\pm 0.0042}$} & \third{$0.9063_{\pm 0.0067}$} & \third{$0.8371_{\pm 0.0048}$}\\
         Exphormer & \first{$0.8903_{\pm 0.0037}$} & \second{$0.5351_{\pm 0.0046}$} & \second{$0.9074_{\pm 0.0053}$} & \second{$0.8377_{\pm 0.0078}$}\\
         GOAT & $0.7159_{\pm 0.0125}$ & $0.4461_{\pm 0.0050}$ & $0.8109_{\pm 0.0102}$ & $0.8311_{\pm 0.0104}$ \\
         \midrule
         GPS + Mamba & \third{$0.8310_{\pm 0.0028}$} & $0.4513_{\pm 0.0097}$ & $0.8993_{\pm 0.0054}$ & $0.8370_{\pm 0.0105}$ \\
         GMN & \second{$0.8769_{\pm 0.0050}$} & \first{$0.5407_{\pm 0.0031}$} & \first{$0.9101_{\pm 0.0023}$} & \first{$0.8452_{\pm 0.0021}$} \\
    \toprule
    \end{tabular}
    \label{tab:heterophilic}
    }
\end{table}

\subsection{Ablation Study} \label{sec:ablation-study}
To evaluate the contribution of each component of GMNs in its performance, we perform ablation study. Table~\ref{tab:my_label} reports the results. The first row, reports the performance of GMNs with its full architecture. Then in each row, we modify one the elements while keeping the other unchanged: Row 2 remove the bidirectional Mamba and use a simple Mamba. Row 3 remove the MPNN and Row 4 use PPR ordering. Finally the last row remove PE. Results show that all the elements of GMN contributes to its performance with most contribution from bidirection Mamba.

\begin{table}
    \centering
    \caption{Ablation study on GMN architecture.}
    \resizebox{0.7\linewidth}{!}{
    \begin{tabular}{l  c  c  c  c }
    \toprule
        \multirow{2}{*}{\textbf{Model}} & \textbf{Roman-empire}  & \textbf{Amazon-ratings} & \textbf{Minesweeper} \\
         & Accuracy $\uparrow$& Accuracy $\uparrow$ & ROC AUC $\uparrow$\\
         \midrule
         \midrule
         GMN & \first{$0.8769_{\pm 0.0050}$} & \first{$0.5407_{\pm 0.0031}$} & \first{$0.9101_{\pm 0.0023}$}   \\
         w/o bidirectional Mamba & $0.8327_{\pm 0.0062}$ & $0.5016_{\pm 0.0045}$ & $0.8597_{\pm 0.0028}$ \\
         w/o MPNN & \second{$0.8620_{\pm 0.0043}$} & \second{$0.5312_{\pm 0.0044}$} & {$0.8983_{\pm 0.0031}$} \\
         PPR ordering & \third{$0.8612_{\pm 0.0019}$} & \third{$0.5299_{\pm 0.0037}$} & \third{$0.8991_{\pm 0.0021}$} \\
         w/o PE & $0.8591_{\pm 0.0054}$ & $0.5308_{\pm 0.0026}$ & \second{$0.9011_{\pm 0.0025}$} \\
    \toprule
    \end{tabular}
    \label{tab:my_label}
    }
\end{table}

\subsection{Efficiency}
As we discussed earlier, one of the main advantages of our model is its efficiency and memory usage. We evaluate this claim on OGBN-Arxiv~\citep{hu2020open} and MalNet-Tiny~\citep{dwivedi2023benchmarking} datasets and report the results in Figure~\ref{tab:efficiency}. Our variants of GMNs are the most efficienct methods while achieving the best performance. To show the trend of scalability, we use MalNet-Tiny and plot the memory usage of GPS and GMN in Figure~\ref{fig:memory}. While GPS, as a graph transformer framework, requires high computational cost (GPU memory usage), GMNs's memory scales linearly with respect to the input~size.

\begin{figure}
  \begin{minipage}[b]{.55\linewidth}
    \centering
    \caption{$\text{Efficiency evaluation and accuracy of GMNs and baselines on OBGN-Arxiv and}$ $\text{MalNet-Tiny. Highlighted are the top \first{first}, \second{second}, and \third{third} results. OOM: Out of Memory.}$} \label{tab:efficiency}
    \resizebox{1.2\linewidth}{!}{
    \begin{tabular}{c c c c c c c c}
    \toprule
        \multirow{2}{*}{\textbf{Method}} & \multirow{2}{*}{Gated-GCN} &  \multirow{2}{*}{GPS} & \multirow{2}{*}{NAGphormer} & \multirow{2}{*}{Exphormer$^{\dagger}$} & \multirow{2}{*}{GOAT} & \multicolumn{2}{c}{Ours}\\
        \cmidrule(lr){7-8}
        & & & & & & GPS+Mamba & $\:\:\: \:$GMN $\: \:\:$\\
        \midrule
        \multicolumn{8}{c}{OGBN-Arxiv}\\
        \midrule
         Training/Epoch (s) & \first{0.68} & OOM & 5.06 & 1.97 & 13.09 & \second{1.18} & \third{1.30}\\
         Memory (GB) & 11.09 & OOM & \third{6.24} & 36.18 & 8.41 & \second{5.02} & \first{3.85}\\
         Accuracy & 0.7141 & OOM & 0.7013 & \third{0.7228} & 0.7196 & \second{0.7239} & \first{0.7248}\\
         \midrule
         \multicolumn{8}{c}{MalNet-Tiny}\\
         \midrule
         Training/Epoch (s) & \first{10.3} & 148.99 & - & 57.24 & - & \second{36.07} & \third{41.00}\\
         Accuracy & 0.9223 & \third{0.9234} & - & 0.9224 & - & \second{0.9311} & \first{0.9415}\\
    \toprule
        \multicolumn{8}{l}{$^{\dagger}$ We follow the original paper~\citep{shirzad2023exphormer} and use one virtual node in efficiency evaluation.}\\
    \end{tabular}
    }
  \end{minipage} \hfill
  \begin{minipage}{.3\linewidth}
    \centering
    \vspace{-7ex}
    \includegraphics[width=0.99\linewidth]{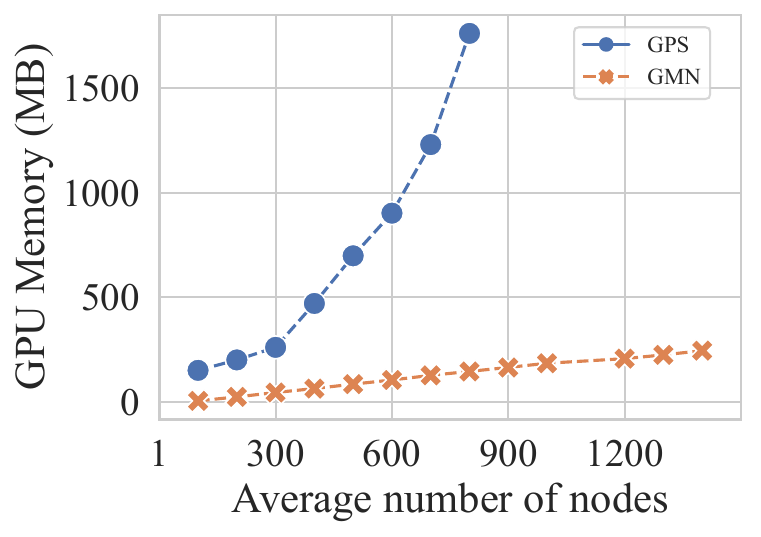}
    \vspace{-4ex}
    \caption{Memory of GPS and GMN on MalNet-Tiny dataset.}\label{fig:memory}
  \end{minipage}
\end{figure}

\section{Conclusion}
In this paper, we present Graph Mamba Networks (GMNs) as a new class of graph learning based on State Space Model. We discuss and categorize the new challenges when adapting SSMs to graph-structured data, and present four required and one optional steps to design GMNs, where we choose (1) Neighborhood Tokenization, (2) Token Ordering, (3) Architecture of Bidirectional Selective SSM Encoder, (4) Local Encoding, and dispensable (5) PE and SE. We further provide theoretical justification for the power of GMNs and conduct several experiments to empirically evaluate their performance.

\section*{Potential Broader Impact}
This paper presents work whose goal is to advance the field of Machine Learning. There are many potential societal consequences of our work, none of which we feel must be specifically highlighted here.

\newpage
\bibliography{main}
\bibliographystyle{icml2024}

\newpage
\appendix
\onecolumn

\section{Details of Datasets}\label{app:data}
The statistics of all the datasets are reported in Table~\ref{tab:dataset}. For additional details about the datasets, we refer to the Long-range graph benchmark~\citep{long-range-data}, GNN Benchmark~\citep{dwivedi2023benchmarking}, Heterophilic Benchmark \citep{platonov2023a}, and Open Graph Benchmark \citep{hu2020open}.
\begin{table}
    \centering
    \caption{Dataset Statistics.}
    \resizebox{0.90\linewidth}{!}{
    \begin{tabular}{l c  c  c  c  c  c c c c}
    \toprule
        \multicolumn{1}{c}{\multirow{2}{*}{Dataset}} & \multirow{2}{*}{\#Graphs} & \multirow{2}{*}{Average \#Nodes} & \multirow{2}{*}{Average \#Edges} & \multirow{2}{*}{\#Class} & \multicolumn{2}{c}{Setup} &  \multirow{2}{*}{Metric}\\
        \cmidrule(lr){6-7}
        & & & &  & Input Level & Task \\
        \midrule
        \multicolumn{8}{c}{Long-range Graph Benchmark~\citep{long-range-data}} \\
        \midrule
         \multirow{1}{*}{COCO-SP} & \multirow{1}{*}{123,286} & \multirow{1}{*}{476.9} & \multirow{1}{*}{2693.7}& 81 & \multirow{1}{*}{Node} & \multirow{1}{*}{Classification} & F1 score \\
        \multirow{1}{*}{PascalVOC-SP}& \multirow{1}{*}{11,355} & \multirow{1}{*}{479.4} & \multirow{1}{*}{2710.5} & 21 &\multirow{1}{*}{Node} &\multirow{1}{*}{Classification} & F1 score \\
        \multirow{1}{*}{Peptides-Func}& \multirow{1}{*}{15,535} & \multirow{1}{*}{150.9} & \multirow{1}{*}{307.3} & 10 &\multirow{1}{*}{Graph} &\multirow{1}{*}{Classification} & Average Precision\\
        \multirow{1}{*}{Peptides-Struct}& \multirow{1}{*}{15,535} & \multirow{1}{*}{150.9} & \multirow{1}{*}{307.3} & 11 (regression) &\multirow{1}{*}{Graph} &\multirow{1}{*}{Regression} & Mean Absolute Error\\
        \midrule
        \multicolumn{8}{c}{GNN Benchmark~\citep{dwivedi2023benchmarking}}\\
        \midrule
        \multirow{1}{*}{MNIST} & 70,000 & 70.6 & 564.5 & 10 & Graph & Classification & Accuracy \\
         \multirow{1}{*}{CIFAR10} &\multirow{1}{*}{60,000} & \multirow{1}{*}{117.6} & \multirow{1}{*}{941.1} & 10 &\multirow{1}{*}{Graph} &\multirow{1}{*}{Classification} & Accuracy \\
         \multirow{1}{*}{Pattern} & \multirow{1}{*}{14,000} & \multirow{1}{*}{118.9} & \multirow{1}{*}{3,039.3} & 2 & \multirow{1}{*}{Node} &\multirow{1}{*}{Classification} & Accuracy \\
         \multirow{1}{*}{MalNet-Tiny} & \multirow{1}{*}{5,000} & \multirow{1}{*}{1,410.3} & \multirow{1}{*}{2,859.9} & 5 &\multirow{1}{*}{Graph} &\multirow{1}{*}{Classification}& Accuracy \\ 
        \midrule
        \multicolumn{8}{c}{Heterophilic Benchmark \citep{platonov2023a}}\\
        \midrule
        \multirow{1}{*}{Roman-empire} & 1 & 22,662 & 32,927 &  18 & Node & Classification & Accuracy\\
        \multirow{1}{*}{Amazon-ratings} &\multirow{1}{*}{1} & \multirow{1}{*}{24,492} & \multirow{1}{*}{93,050} & \multirow{1}{*}{5} &\multirow{1}{*}{Node} & Classification & Accuracy  \\
         \multirow{1}{*}{Minesweeper} & 1 & \multirow{1}{*}{10,000} & \multirow{1}{*}{39,402} & \multirow{1}{*}{2} &\multirow{1}{*}{Node} & Classification & ROC AUC\\
         \multirow{1}{*}{Tolokers}&  \multirow{1}{*}{1} & \multirow{1}{*}{11,758} & \multirow{1}{*}{519,000} & \multirow{1}{*}{2} &\multirow{1}{*}{Node} & Classification & ROC AUC\\ 
         \midrule
         \multicolumn{8}{c}{Very Large Dataset~\citep{hu2020open}}\\
        \midrule
        OGBN-Arxiv &  \multirow{1}{*}{1} & \multirow{1}{*}{169,343} & \multirow{1}{*}{1,166,243} & \multirow{1}{*}{40} &\multirow{1}{*}{Node} & Classification & Accuracy\\
    \toprule
    \end{tabular}
    \label{tab:dataset}
    }
\end{table}

\begin{table}
    \centering
    \caption{Search space of hyperparameters for each dataset$^{\dagger}$.}
    \resizebox{0.9\linewidth}{!}{
    \begin{tabular}{l c  c  c   c c }
    \toprule
        \multicolumn{1}{c}{Dataset} & $M$ & $s$ & \#Layers &  Max \# Epochs & Learning Rate\\
        \midrule
        \multicolumn{6}{c}{Long-range Graph Benchmark~\citep{long-range-data}} \\
        \midrule
         \multirow{1}{*}{COCO-SP} & $\{1, 2, 4, 8, 16, 32 \}$ & $\{0, 1, 2, 4, 8, 16 \}$ & $\{4, 5\}$ &300 & 0.001\\
        \multirow{1}{*}{PascalVOC-SP}& $\{1, 2, 4, 8, 16, 32 \}$  & $\{0, 1, 2, 4, 8, 16 \}$ & $\{4, 5\}$  &300 & 0.001 \\
        \multirow{1}{*}{Peptides-Func}& $\{1, 2, 4, 8, 16, 32 \}$ & $\{0, 1, 2, 4, 8, 16 \}$ & $\{4, 5\}$  &300 & 0.001 \\
        \multirow{1}{*}{Peptides-Struct}& $\{1, 2, 4, 8, 16, 32 \}$ & $\{0, 1, 2, 4, 8, 16 \}$ &  $\{4, 5\}$ &300 & 0.001 \\
        \midrule
        \multicolumn{6}{c}{GNN Benchmark~\citep{dwivedi2023benchmarking}}\\
        \midrule
        \multirow{1}{*}{MNIST} & $\{1, 2, 4, 8, 16, 32 \}$ & $\{0, 1, 2, 4, 8, 16 \}$ & $\{3, 4, 6\}$ & 300  & 0.001  \\
         \multirow{1}{*}{CIFAR10} & $\{1, 2, 4, 8, 16, 32 \}$ & $\{0, 1, 2, 4, 8, 16 \}$ & $\{3, 4, 6\}$  &300 & 0.001\\
         \multirow{1}{*}{Pattern} & $\{1, 2, 4, 8, 16, 32 \}$ & $\{0, 1, 2, 4, 8, 16 \}$ & $\{3, 4, 6\}$  &300 & 0.001  \\
         \multirow{1}{*}{MalNet-Tiny} & $\{1, 2, 4, 8, 16, 32 \}$ & $\{0, 1, 2, 4, 8, 16 \}$  & $\{3, 4, 6\}$&300 & 0.001 \\ 
        \midrule
        \multicolumn{6}{c}{Heterophilic Benchmark \citep{platonov2023a}}\\
        \midrule
        \multirow{1}{*}{Roman-empire} & $\{1, 2, 4, 8, 16, 32 \}$ & $\{0, 1, 2, 4, 8, 16 \}$  & $\{3, 4, 6\}$& 300 & 0.001 \\
        \multirow{1}{*}{Amazon-ratings} & $\{1, 2, 4, 8, 16, 32 \}$ & $\{0, 1, 2, 4, 8, 16 \}$  & $\{3, 4, 6\}$& 300  & 0.001  \\
         \multirow{1}{*}{Minesweeper} & $\{1, 2, 4, 8, 16, 32 \}$ & $\{0, 1, 2, 4, 8, 16 \}$  & $\{3, 4, 6\}$& 300 & 0.001 \\
         \multirow{1}{*}{Tolokers}& $\{1, 2, 4, 8, 16, 32 \}$  & $\{0, 1, 2, 4, 8, 16 \}$ & $\{3, 4, 6\}$ &300 & 0.001\\ 
         \midrule
         \multicolumn{6}{c}{Very Large Dataset~\citep{hu2020open}}\\
        \midrule
        OGBN-Arxiv & $\{1, 2, 4, 8, 16, 32 \}$ & $\{0, 1, 2, 4, 8, 16 \}$ & $\{3, 4, 6\}$& 300 & 0.001 \\
    \toprule
    \multicolumn{6}{l}{$^{\dagger}$ This space is not fully searched and preliminary results are reported based on its subspace. We will update the results accordingly. }
    \end{tabular}
    \label{tab:hyperparameter}
    }
\end{table}

\section{Experimental Setup}\label{app:experimental-setup}

\subsection{Hyperparameters}
We use grid search to tune hyperparameters and the search space is reported in Table~\ref{tab:hyperparameter}. Following previous studies, we use the same split of traning/test/validation as \citep{GPS}. We report the results over the 4 random seeds. Also, for the baselines' results (in Tables \ref{tab:LRGD}, \ref{tab:GNN}, and \ref{tab:heterophilic}), we have re-used and reported the benchmark results in the work by \citet{shirzad2023exphormer, deng2024polynormer, tonshoff2023did} and \citet{GMB}.\footnote{In the previous version of this preprint (Feb 13, 2024), the reported results of Exphormer~\citep{shirzad2023exphormer} and GPS~\citep{GPS} came from the results of \citet{GMB}.}.

\begin{algorithm}[t]
    \small
    \caption{Graph Mamba Networks (with one layer)}
    \label{alg:GMN}
    \begin{algorithmic}[1]
        \Require{A graph $G = (V, E)$, input features $\mathbf{X} \in \R^{n \times d}$, ordered array of nodes $V = \{v_1, \dots, v_n\}$, and hyperparameters $M, m$, and $s$. \hspace*{-4ex}\textcolor{dark2green}{\textbf{Optional:}  Matrix $\mathbf{P}$, whose rows are positional/structural encodings correspond to nodes, and/or a MPNN model $\Psi(.)$.}}
        \Ensure{The updated node encodings $\mathbf{X}_{\text{new}}$.}
        \For{$v \in V$} \Comment{\third{This block can be done before the training.}}
            \For{$\hat{m} = 0, \dots, m$}
                \For{$\hat{s} = 1, \dots, s$}
                    \State $T_{\hat{m}}^{\hat{s}}(v) \leftarrow \emptyset$;
                    \For{$\hat{M} = 1, \dots, M$}
                        \State $\texttt{W} \leftarrow$ Sample a random walk with length $\hat{m}$ starting from $v$;
                        \State $T_{\hat{m}}^{\hat{s}}(v) \leftarrow T_{\hat{m}}^{\hat{s}}(v) \cup \{ u | u \in \texttt{W}\}$;
                    \EndFor
                \EndFor
            \EndFor
        \EndFor
        \State
        \State
        \Comment{\third{Start the training:}$\qquad\qquad\qquad\qquad\qquad\qquad\qquad\qquad\qquad\qquad\qquad\qquad\qquad\qquad\qquad\qquad\qquad\qquad\qquad\qquad\qquad\qquad$}
        \State Initialize all learnable parameters;
        \For{$v \in V$}
            \For{$ j = 1, \dots, s$}
                \For{$i = 1, \dots, m$}
                    \State $\mathbf{x}^{(i - 1)s + j}_v \leftarrow \phi\left(G[T^{j}_i(v)], \mathbf{X}_{T^{j}_i(v)} \first{$\: || \:\mathbf{P}_{T^{j}_i(v)}$} \right)$;
                \EndFor
            \EndFor
            $\boldsymbol{\Phi}_v \leftarrow \mathbin\Vert_{i = 1}^{sm} \: \mathbf{x}^{i}_{v}$; \Comment{\third{$\boldsymbol{\Phi}_v$ is a matrix whose rows are $\mathbf{x}^{i}_{v}$.}}
            \State $\boldsymbol{y}_{\text{output}}(v) \leftarrow \texttt{BiMamba}\left( \boldsymbol{\Phi}_v \right)$; \Comment{\second{Using Algorithm~\ref{alg:BiMamba}.}}
        \EndFor
        \State
        \Comment{\third{Each node is a token:}$\:\:\:\qquad\qquad\qquad\qquad\qquad\qquad\qquad\qquad\qquad\qquad\qquad\qquad\qquad\qquad\qquad\qquad\qquad\qquad\qquad\qquad\qquad$}
        \State $\boldsymbol{Y} \leftarrow \mathbin\Vert_{i = 1}^{sm} \:\boldsymbol{y}_{\text{output}}(v)$; \Comment{\third{$\boldsymbol{y}$ is a matrix whose rows are $\boldsymbol{y}_{\text{output}}(v)$.}}
        \State $\boldsymbol{Y}_{\text{output}} \leftarrow \texttt{BiMamba}\left( \boldsymbol{Y} \right) + \first{$\Psi\left( G, \mathbf{X} \Vert \mathbf{P} \right)$}$;
    \end{algorithmic}
\end{algorithm}

\begin{algorithm}[t]
    \small
    \caption{Bidirectional Mamba}
    \label{alg:BiMamba}
    \begin{algorithmic}[1]
        \Require{A sequence $\boldsymbol{\Phi}$ (Ordered matrix, where each row is a token).}
        \Ensure{The updated sequence encodings $\boldsymbol{\Phi}$.}
        \State
        \Comment{\third{Forward Scan:}$\qquad\qquad\qquad\qquad\qquad\qquad\:\:\quad\qquad\qquad\qquad\qquad\qquad\qquad\qquad\qquad\qquad\qquad\qquad\qquad\qquad\qquad\qquad\qquad$}
        \State $\boldsymbol{\Phi}_{\text{f}} = \sigma\left(\texttt{Conv}\left( \mathbf{W}_{\text{input}, f} \: \texttt{LayerNorm}\left( \boldsymbol{\Phi}\right) \right)\right)$;
        \State $\mathbf{B}_{\text{f}} = \mathbf{W}_{\textbf{B}_{\text{f}}} \: \boldsymbol{\Phi}_{\text{f}}$;
        \State $\mathbf{C}_{\text{f}} = \mathbf{W}_{\textbf{C}_{\text{f}}} \: \boldsymbol{\Phi}_{\text{f}}$;
        \State $\boldsymbol{\Delta}_{\text{f}} = \texttt{Softplus}\left( \mathbf{W}_{\Delta_{\text{f}}} \: \boldsymbol{\Phi}_{\text{f}}\right)$;
        \State $\bar{\mathbf{A}} = \texttt{Discrete}_{\mathbf{A}}\left(\mathbf{A}, \boldsymbol{\Delta}  \right)$;
        \State $\bar{\mathbf{B}}_{\text{f}} = \texttt{Discrete}_{\mathbf{B}_{\text{f}}}\left(\mathbf{B}_{\text{f}}, \boldsymbol{\Delta}  \right)$;
        \State $\boldsymbol{y}_{\text{f}} = \texttt{SSM}_{\bar{\mathbf{A}}, \bar{\mathbf{B}}_{\text{f}}, \mathbf{C}_{\text{f}} }\left( \boldsymbol{\Phi}_{\text{f}} \right)$;
        \State $\boldsymbol{Y}_{\text{f}} = \mathbf{W}_{\text{f}, 1}\left(\boldsymbol{y}_{\text{f}}  \odot  \sigma\left( \mathbf{W}_{\text{f}, 2} \: \texttt{LayerNorm}\left( \boldsymbol{\Phi}  \right) \right) \right)$;
        \State \Comment{\third{Backward Scan:}$\qquad\qquad\qquad\qquad\qquad\qquad\:\:\:\qquad\qquad\qquad\qquad\qquad\qquad\qquad\qquad\qquad\qquad\qquad\qquad\qquad\qquad\qquad\qquad$}

        \State $ \boldsymbol{\Phi} \leftarrow \texttt{Reverse-rows}\left( \boldsymbol{\Phi} \right)$; \Comment{\second{Reverse the order of rows in the matrix.}}
        \State $\boldsymbol{\Phi}_{\text{b}} = \sigma\left(\texttt{Conv}\left( \mathbf{W}_{\text{input, b}} \: \texttt{LayerNorm}\left( \boldsymbol{\Phi}\right) \right)\right)$;
        \State $\mathbf{B}_{\text{b}} = \mathbf{W}_{\textbf{B}_{\text{b}}} \: \boldsymbol{\Phi}_{\text{b}}$;
        \State $\mathbf{C}_{\text{b}} = \mathbf{W}_{\textbf{C}_{\text{b}}} \: \boldsymbol{\Phi}_{\text{b}}$;
        \State $\boldsymbol{\Delta}_{\text{b}} = \texttt{Softplus}\left( \mathbf{W}_{\Delta_{\text{b}}} \: \boldsymbol{\Phi}_{\text{b}}\right)$;
        \State $\bar{\mathbf{A}} = \texttt{Discrete}_{\mathbf{A}}\left(\mathbf{A}, \boldsymbol{\Delta}  \right)$;
        \State $\bar{\mathbf{B}}_{\text{b}} = \texttt{Discrete}_{\mathbf{B}_{\text{b}}}\left(\mathbf{B}_{\text{b}}, \boldsymbol{\Delta}  \right)$;
        \State $\boldsymbol{y}_{\text{b}} = \texttt{SSM}_{\bar{\mathbf{A}}, \bar{\mathbf{B}}_{\text{b}}, \mathbf{C}_{\text{b}} }\left( \boldsymbol{\Phi}_{\text{b}} \right)$;
        \State $\boldsymbol{Y}_{\text{b}} = \mathbf{W}_{\text{b}, 1}\left(\boldsymbol{y}_{\text{b}}  \odot  \sigma\left( \mathbf{W}_{\text{b}, 2} \: \texttt{LayerNorm}\left( \boldsymbol{\Phi}  \right) \right) \right)$;

        \State \Comment{\third{Output:}$\qquad\qquad\qquad\quad\qquad\qquad\qquad\qquad\:\quad\qquad\qquad\qquad\qquad\qquad\qquad\qquad\qquad\qquad\qquad\qquad\qquad\qquad\qquad\qquad\qquad$}
        
        \State $\boldsymbol{y}_{\text{output}} \leftarrow \mathbf{W}_{\text{out}} \left( \boldsymbol{Y}_{\text{f}} + \texttt{Reverse-row}(\boldsymbol{Y}_{\text{b}}) \right)$;
        \State \Return $\boldsymbol{y}_{\text{output}}$;
    \end{algorithmic}
\end{algorithm}

\section{Details of GMN Architecture: Algorithms}
Algorithm~\ref{alg:GMN} shows the forward pass of the Graph Mamba Network with one layer. For each node, GMN first samples $M$ walks with length $\hat{m} = 1, \dots, m$ and constructs its corresponding tokens, each of which as the induced subgraph of $M$ walks with length $\hat{m}$. We repeat this process $s$ times to have longer sequence and more samples from each hierarchy of the neighborhoods. This part of the algorithm, can be computed before the training process and in CPU. Next, GMNs for each node encode its tokens using an encoder  $\phi(.)$, which can be an MPNN (e.g., gated-GCN~\citep{bresson2017residual}) or RWF encoding (proposed by \citet{nshoff2023walking}). We then pass the encodings to a Bidirectional Mamba block, which we describe in Algorithm~\ref{alg:BiMamba} (This algorithm is simple two Mamba block~\citep{gu2023mamba} such that we use one of the backward or forward ordering of inputs for each of them). At the end of line 15, we have the node encodings obtained from subgraph tokenization. Next, we treat each node as a token and pass the encoding to another bidirectional Mamba, with a specific order. We have used degree ordering in our experiments, but there are some other approaches that we have discussed in the main paper.

\section{Additional Experimental Results}

\subsection{Parameter Sensitivity}

\head{The effect of $M$}
Parameter $M$ is the number of walks that we aggregate to construct a subgraph token. To evaluate its effect on the performance of the GMN, we use two datasets of Roman-empire and PascalVOC-SP, from two different benchmarks, and vary the value of $M$ from 1 to 10. The results are reported in Figure~\ref{fig:param} (Left).  These results show that performance peaks at certain value of $M$ and the exact value varies with the dataset. The main reason is, parameter $M$ determines that how many walks can be a good representative of the neighborhood of a node, and so depends on the density, homophily score, and network topology this value can be different. 

\head{The effect of $m$}
Similar to the above, we use two datasets of Roman-empire and PascalVOC-SP, from two different benchmarks, and vary the value of $m$ from 1 to 60. The results are reported in Figure~\ref{fig:param} (Middle). The performance is non-decreasing with respect to the value of $m$. That is, increasing the value of $m$, i.e., considering far neighbors in the tokenization process, does not damage the performance (might lead to better results). Intuitively, using large values of $m$ is expected to damage the performance due to the over-smoothing and over-squashing; however, the selection mechanism in Bidirectional Mamba can select informative tokens (i.e., neighborhood), filtering information that cause performance damage. 

\head{The effect of $s$}
Using the same setting as the above, we report the results when varying the value of $s$ in Figure~\ref{fig:param} (Right). Result show that increasing the value of $s$ can monotonically improve the performance. As discussed earlier, longer sequences of tokens can provide more context for our model and due to the selection mechanism in Mamba~\citep{gu2023mamba}, GMNs can select informative subgraphs/nodes and filter irrelevant tokens, resulting in better results with longer sequences.

\begin{figure*}
    \centering
    \includegraphics[width=0.33\textwidth]{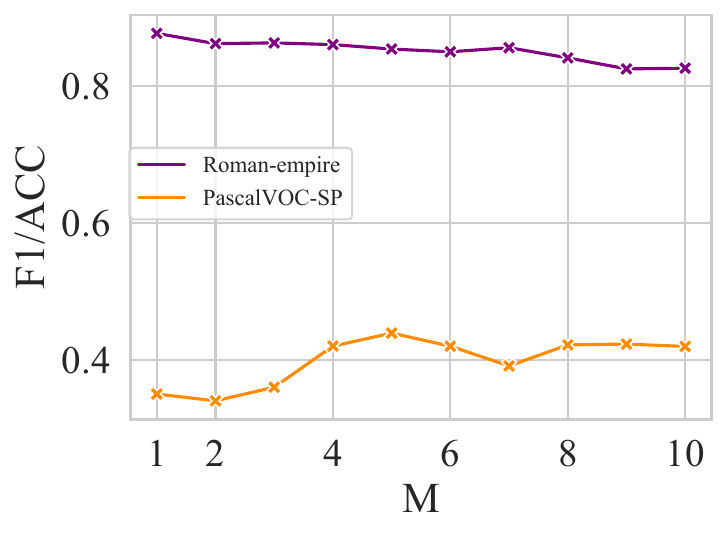}~
    \includegraphics[width=0.33\textwidth]{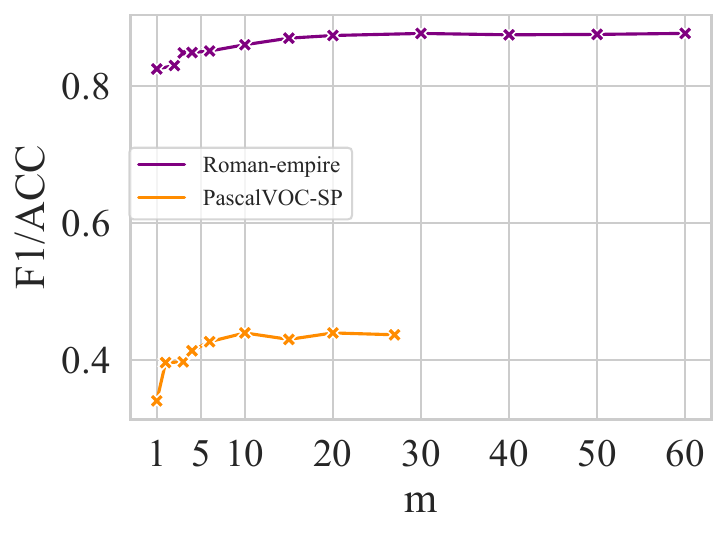}~
    \includegraphics[width=0.33\textwidth]{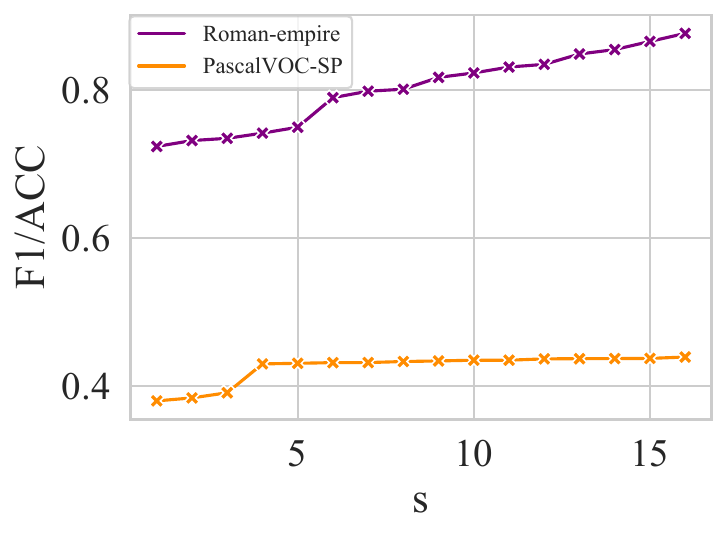}
    \vspace{-3ex}
    \caption{The effect of (Left) $M$, (Middle) $m$, and (Right) $s$ on the performance of GMNs.}
    \label{fig:param}
\end{figure*}

\subsection{Comparison with GRED~\citep{ding2023recurrent} and S4G~\citep{s4g}}
 GRED~\citep{ding2023recurrent} is a recent work on ArXiv that uses an RNN on the set of neighbors with distance $k = 1, \dots, K$ to a node of interest for the node classification task. S4G~\citep{s4g} is a recent unpublished (without preprint but available on Openreview) work that uses the same approach as GRED but replaces the RNN block with a structured SSM. Since the code or models of GRED and S4G are not available, for the comparison of GMNs, S4G, and GRED, we run GMNs on the datasets used in the original papers~\citep{ding2023recurrent, s4g}. The results are reported in Table~\ref{tab:GRED}. GMNs consistently outperforms GRED~\citep{ding2023recurrent} in all datasets and outperforms S4G in 4 out of 5 datasets. The reason is two folds: (1) GMNs use sampled walks instead of all the nodes within $k$-distance neighborhood. As discussed in Theorem~\ref{thm:neighborhood}, this approach with large enough length and samples is more expressive than considering all nodes within the neighborhood. (2) S4G and GRED use simple RNN and SSM to aggregate the information about all the different neighborhoods of a node, while GMNs use Mamba, which have a selection mechanism. This selection mechanism help the model to choose neighborhoods that are more informative and important than others. (3) GRED and S4G are solely based on distance encoding, meaning that they miss the connections between nodes in $k$-distance and $(k+1)$-distance. Figure~\ref{fig:failureExample} shows a failure example of these methods that solely are based on distance of nodes. To obtain the node encoding of node $A$, these two methods group nodes wit respect to their distance to $A$, either $d = 1, 2,$ and $3$. In Figure~\ref{fig:failureExample}, while these two graphs are non-isomorphism, the output of this step for both graphs are the same, meaning that these methods obtain the same node encoding for $A$.

 \begin{figure}
     \centering
     \includegraphics[width=\linewidth]{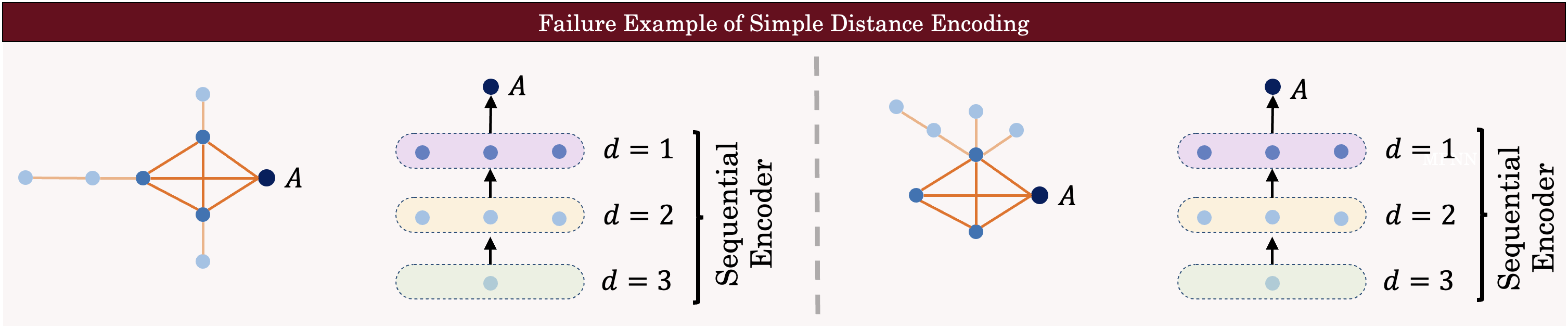}
     \vspace{-2ex}
     \caption{Failure example for methods that are solely based on distance encoding. Solely considering the set of nodes in different distances to the target node misses the connections between them. While the structure of these two graphs are different, the set of nodes with the same distance to node $A$ are the same. Accordingly, GRED~\citep{ding2023recurrent} and S4G~\citep{s4g} achieve the same node encoding for $A$, missing $A$'s neighborhood topology.}
     \label{fig:failureExample}
 \end{figure}

\begin{table}
    \centering
    \caption{Comparison with GRED and S4G Models. Highlighted are the top \first{first} and \second{second} results.}
    \resizebox{0.80\linewidth}{!}{
    \begin{tabular}{l  c  c   c  c  c}
    \toprule
        \multirow{2}{*}{\textbf{Model}} & \textbf{MNIST}  & \textbf{CIFAR10} & \textbf{PATTERN}  & \textbf{Peptides-func} & \textbf{Peptides-struct} \\
         & Accuracy $\uparrow$& Accuracy $\uparrow$ & Accuracy $\uparrow$ & AP $\uparrow$ & MAE $\downarrow$\\
         \midrule
         \midrule
         S4G~\citep{s4g}$^{\dagger}$ & $0.9637_{\pm 0.0017}$ & $0.7065_{\pm 0.0033}$ & \second{$0.8687_{\pm 0.0002}$} & \first{$0.7293_{\pm 0.0004}$} & \second{$0.2485_{\pm 0.0017}$}\\
         GRED~\citep{ding2023recurrent}$^{\ddagger}$ & \second{$0.9822_{\pm 0.0095}$} & \second{$0.7537_{\pm 0.6210}$} & $0.8676_{\pm 0.0200}$ & {$0.7041_{\pm 0.0049}$} & {$0.2503_{\pm 0.0019}$}\\
         \midrule
         GMN (Ours)& \first{$0.9839_{\pm 0.0018}$} & \first{$0.7576_{\pm 0.0042}$} & \first{$0.8714_{\pm 0.0012}$} & \second{$0.7071_{\pm 0.0083}$} & \first{$0.2473_{\pm 0.0025}$} \\
    \toprule
     \multicolumn{6}{l}{$^\dagger$ Results are reported by \citet{s4g}.}\\
    \multicolumn{6}{l}{$^\ddagger$ Results are reported by \citet{ding2023recurrent}.}
    \end{tabular}
    \label{tab:GRED}
    }
\end{table}

\section{Complexity Analysis of GNMs}
\head{\underline{$m \geq 1$}} For each node $v \in V$, we generate $M \times s$ walks with length $\hat{m} = 1, \dots, m$, which requires $\mathcal{O}\left( M \times s \times (m+1) \right)$ time. Given $K$ tokens, the complexity of bidirectional Mamba is $2 \times$ of Mamba~\citep{gu2023mamba}, which is linear with respect to $K$. Accordingly, since we have $\mathcal{O}\left( M \times s \times m \right)$ tokens, the final complexity for a given node $v \in V$ is $\mathcal{O}\left( M \times s \times (m+1) \right)$. Repeating the process for all nodes, the time complexity is $\mathcal{O}\left( M \times s \times (m+1) \times |V|  + |E| \right)$, which is linear in terms of $|V|$ and $|E|$ (graph size). To compare to the quadratic time complexity of GTs, even for small networks, note that in practice, $M \times s \times (m+1) \ll |V|$, and in our experiments usually $M \times s \times (m+1) \leq 200$. Also, note that using MPNN as an optional step cannot affect the time complexity as the MPNN requires $\mathcal{O}\left( |V| + |E| \right)$ time.

\head{\underline{$m = 0$}} In this case, each node is a token and so the GMN requires $\mathcal{O}\left( |V| \right)$ time. Using MPNN in the architecture, the time complexity would be $\mathcal{O}\left( |V| + |E| \right)$, dominating by the MPNN time complexity.

As discussed above, based on the properties of Mamba architecture, longer sequence of tokens (larger value of $s \geq 1$) can improve the performance of the method. Based on the abovementioned time complexity when $m \geq 1$, we can see that there is a trade-off between time complexity and the performance of the model. That is, while larger $s$ result in better performance, it results in slower model.

\section{Discussion on a Concurrent Work}
\citep{GMB}, in work concurrent to and independent of ours, replace Transformer architecture (attention block) with a Mamba block~\citep{gu2023mamba} in GPS framework~\citep{GPS}. Next, we discuss these two works in different aspects:

\head{Architecture Design}
As mentioned above, the GMB model~\citep{GMB} replaces the attention module in GPS framework~\citep{GPS} with Mamba block~\citep{gu2023mamba}. Accordingly, it treats each node as a token, uses PE/SE as initial feature vectors, and ordered nodes based on their degree. Since this approach is based on node tokenization and uses one directional Mamba~\citep{gu2023mamba}, it suffers from the limitations of GTs with node tokenization mentioned in \S~\ref{sec:motivations}. More specifically: (1) Although it has more ability to learn long-range dependencies, this approach lacks inductive bias about the graph structure and requires complex PE/SE. (2) Using a simple one directional Mamba causes the lack of inductive bias about some structures in the graph. Figure~\ref{fig:failureExample2} provides an example of this information loss. In part (A), we show an example of node tokenization and node ordering with respect to their degrees. Based on the information flow, using a one directional Mamba, nodes with high-degree do not have any information about the structure of the graph. For example, in Figure~\ref{fig:failureExample2} (B), even with using complex PE/SE, nodes in the right hand side do not have any information about the global information in the left hand side, due to the one directional information flow of Mamba block. As discussed earlier in the paper, this is one of the new challenges of using SSMs (compare to GTs). The main reason is, attentions in Transformers consider all nodes connected and so the information could pass between each pair of nodes. In sequential encoders (even with selection mechanism), however, each token has the information about its previous tokens. Our neighborhood sampling and its reverse ordering can address this issue due to its implicit order of neighborhood hierarchy. 

Comparing to GMNs, GMB can be seen as a special case of GMNs when we use $m = 0$ and replace bidirectional Mamba block with a one directional Mamba block. Our approach using parameter $m$ provides the flexibility of using either node or subgraph tokenization, whenever either inductive bias or long-range dependencies is more important to the task and the dataset. Furthermore, having these two special traits of GMNs compared to GMB results in provable expressive power of GMNs, which we discuss in the following.

 \begin{figure}
     \centering
     \includegraphics[width=0.9\linewidth]{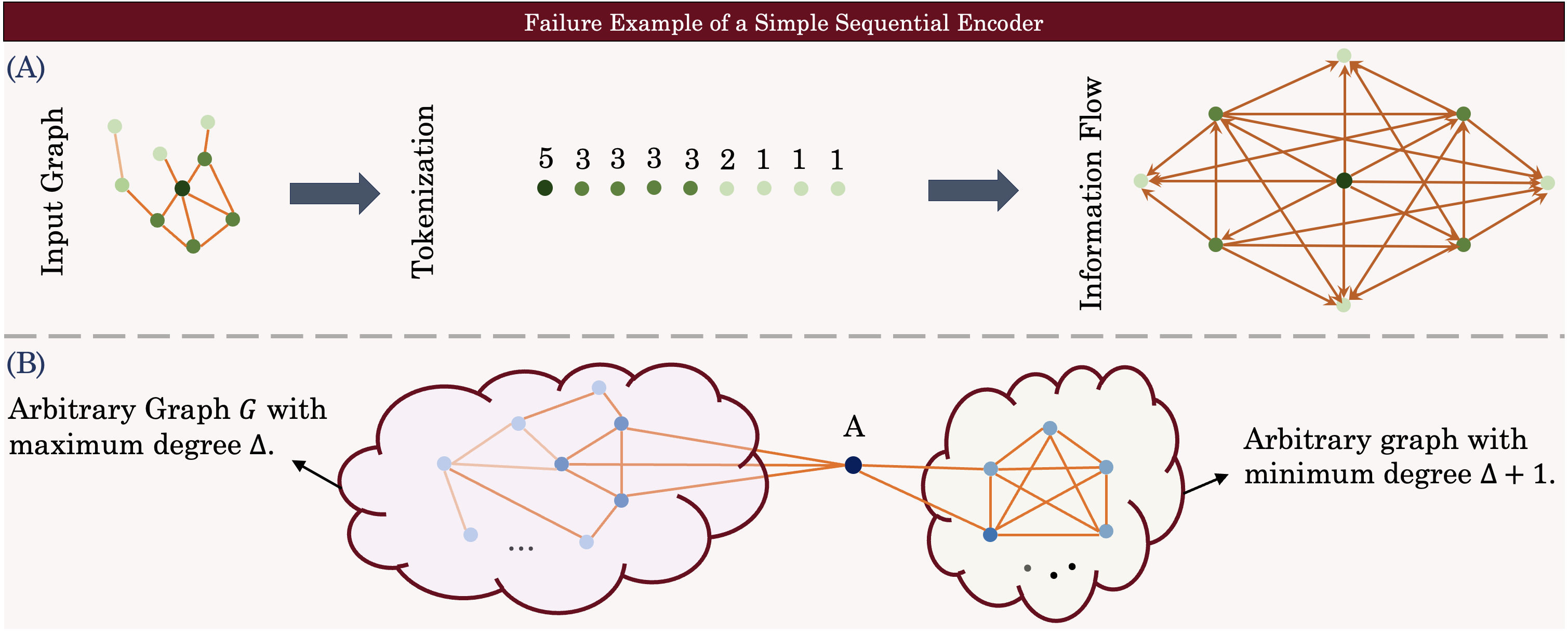}
     \vspace{-2ex}
     \caption{(A) An example of node tokenization and its information flow. Even using PE/SE, nodes at the beginning of the sequence do not have any information about the structure of the graph! (B) Potential failure example for using a simple one directional sequential encoder when each token is a node. Nodes in the right hand side do not have any information about the structure of the graph in the left hand side (Or vise versa depends on the direction of the order). }
     \label{fig:failureExample2}
 \end{figure}

\head{Expressive Power}
As discussed earlier, due to the lack of inductive bias, GMB method requires complex PE/SE to learn about the structure of the graph. While GMNs \emph{without} PE/SE and MPNNs has unbounded expressive power with respect to isomorphic test (Theorem~\ref{thm:unbounded}), GMB cannot distinguish graphs with the same sequence of degrees and its expressive power is bounded by the expressive power of its MPNN. That is, let $G_1$ and $G_2$ be two graphs with the same sequence of node degrees, Mamba block in GMB, in the worst case of having the same node feature vectors, cannot distinguish $G_1$ and $G_2$ since its input is the same for both of these graphs. Accordingly, the expressive power of GMB is bounded by the expressive power of its MPNN in the GPS framework~\citep{GPS}.

\section{Theoretical Analysis of GMNs}\label{app:theory}

\begin{theorem}
    With large enough $M, m,$ and $s > 0$, GMNs' neighborhood sampling is strictly more expressive than $k$-hop neighborhood sampling.
\end{theorem}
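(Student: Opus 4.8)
The plan is to establish the claim via two separate inclusions: (i) \emph{at least as expressive} — any pair of graphs (or rooted neighborhoods) distinguished by $k$-hop neighborhood sampling is also distinguished by GMNs' random-walk tokenization, and (ii) \emph{strictly more expressive} — there is a pair of graphs that GMNs distinguish but $k$-hop sampling cannot. The notion of ``expressive'' here should be read as: the multiset of induced subgraph tokens (together with their features) that each scheme produces for a given node determines an equivalence relation on (rooted) graphs, and scheme $X$ is at least as expressive as scheme $Y$ if $X$'s relation refines $Y$'s. I would first make this precise, since the theorem statement leaves it implicit, and note the role of ``large enough $M, m, s$'': large $m$ lets the walks reach as far as any fixed $k$-hop ball; large $M$ (and the union-over-walks definition of $T_{\hat m}(v)$) ensures that, with enough sampled walks, $\bigcup_i T_{\hat m, i}(v)$ recovers the \emph{entire} $\hat m$-hop ball with probability approaching $1$; and large $s$ supplies enough independent repetitions that the distinguishing event happens in at least one of them (or, if one prefers a deterministic statement, one argues about the support/limit of the token distribution rather than a single sample).

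For direction (i): take $m \ge k$. Conditioned on the (high-probability, as $M \to \infty$) event that for every $\hat m \le k$ the sampled token $G[T_{\hat m}(v)]$ equals the induced subgraph on the full $\hat m$-hop ball $B_{\hat m}(v)$, the GMN token sequence $G[T_0(v)], \dots, G[T_k(v)]$ contains $G[B_k(v)]$ itself as its last token — and $G[B_k(v)]$ is exactly the object $k$-hop neighborhood sampling uses. Hence any function of the $k$-hop neighborhood is recoverable from the GMN token sequence, so GMNs are at least as expressive. One should be slightly careful to phrase this so that the probabilistic nature of sampling does not weaken the claim: either invoke ``for large enough $M$, with probability $> 1-\delta$'' and push $\delta$ into the statement, or observe that the set of realizable token-tuples (the support) contains the full-ball tuple, which is the honest deterministic content.

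For direction (ii), the strictness, I would exhibit two non-isomorphic rooted graphs $(G_1, A)$ and $(G_2, A)$ whose $k$-hop balls around $A$ are isomorphic for every $k$ (so $k$-hop neighborhood sampling — which only sees the \emph{unordered induced subgraph} at each radius — cannot tell them apart) but which differ in some walk-level statistic that the GMN tokenization, being built from \emph{walks} and feeding an ordered sequence into a selective SSM encoder $\phi$ plus bidirectional Mamba, does see. The natural candidate is essentially the failure example already drawn in Figure~\ref{fig:failureExample}: two graphs where the multiset of vertices at each distance from $A$ coincides but the \emph{adjacencies between} those distance shells differ. Since $\phi$ can be taken to be the walk encoder RWF of \citet{nshoff2023walking}, which provably sees walk structure that pure distance/neighborhood information misses (this is the same mechanism behind Theorem~\ref{thm:unbounded}), there is a choice of parameters under which the GMN token encodings differ for $(G_1,A)$ versus $(G_2,A)$ while the $k$-hop induced subgraphs are identical. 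Combining (i) and (ii) gives strict improvement.

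The main obstacle I anticipate is \textbf{(ii)}, and within it two subtleties: first, pinning down exactly what information ``$k$-hop neighborhood sampling'' is assumed to retain — if it is taken to mean the induced subgraph on $B_k(v)$ \emph{with its internal edge structure}, the separating example must have isomorphic induced $k$-balls yet be globally distinguishable by walks, which forces the distinguishing feature to live in how walks traverse and revisit the ball rather than in the ball's isomorphism type (a CRaWl-style example, not a crude degree-sequence example); second, handling the stochasticity honestly, i.e.\ making ``strictly more expressive'' a statement about the induced equivalence relations / token-distribution supports rather than about one random draw, so that ``large enough $M, m, s$'' is doing genuine work and the argument is not merely ``with positive probability.'' I would resolve the first by borrowing the construction and the walk-expressivity argument from \citet{nshoff2023walking} wholesale, and the second by phrasing everything in terms of the support of the tokenization map in the $M \to \infty$ limit.
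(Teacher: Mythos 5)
Your direction (i) is essentially the paper's argument: fix $m=k$, note that each node $u$ at distance at most $k$ from $v$ has positive probability $p_{v,u}$ of appearing in a length-$k$ walk, and let $M\to\infty$ so that $(1-p_{v,u})^{M}\to 0$ and the union-of-walks token recovers the full $k$-hop ball with probability arbitrarily close to $1$. No issues there.

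Direction (ii) is where you diverge from the paper, and where there is a genuine gap. You propose a separating pair of rooted graphs whose $k$-hop induced balls around the root are isomorphic but whose walk-level statistics differ, to be imported from the CRaWl analysis. But the GMN token for radius $\hat m$ is the \emph{induced subgraph on the union of visited nodes}, not the walks themselves; hence for walks of length at most $k$ the entire distribution of token tuples is a function of the (rooted) isomorphism type of the $k$-ball (up to boundary degrees). If the two balls are isomorphic, the token distributions coincide and no downstream encoder can separate them, so the example you describe cannot exist at matched radius; and requiring isomorphic balls for \emph{every} $k$ forces the rooted graphs themselves to be isomorphic in the finite connected case. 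The CRaWl separation (Theorem~\ref{thm:unbounded}) is against $k$-WL, not against $k$-hop induced-subgraph information, so it does not supply the construction you need; Figure~\ref{fig:failureExample} concerns methods that keep only distance \emph{shells}, a strictly weaker baseline than the induced ball. You correctly flagged exactly this obstacle in your final paragraph but did not resolve it. The paper takes a different route for strictness: it uses large $s$ so that, with probability at least $1-\epsilon$, \emph{every} realizable induced subgraph $G[S]$ of the $k$-hop neighborhood (each occurring with some $q_S>0$) appears among the tokens, and then appeals to the fact that a bag of subgraphs of the ball, each passed through the (weak, e.g.\ 1-WL-bounded) encoder $\phi$, is strictly more informative than the single ball passed through the same $\phi$ --- i.e.\ strictness comes from the subgraph-bag effect under a limited encoder, not from walks seeing something the ball does not contain. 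To repair your argument you would either adopt that bag-of-subgraphs comparison or make explicit that the comparison allows $m>k$, in which case strictness is the (weaker) observation that longer walks simply reach outside the $k$-ball.
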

\begin{proof}
    We first show that in this condition, the random walk neighborhood sampling is as expressive as $k$-hop neighborhood sampling. To this end, given an arbitrary small $\epsilon > 0$, we show that the probability that $k$-hop neighborhood sampling is more expressive than random walk neighborhood sampling is less than $\epsilon$. Let $m = k$, $s = 1$, and $p_{u, v}$ be the probability that we sample node $v$ in a walk with length $m = k$ starting from node $u$. This prbobality is zero if the shortest path of $u$ and $v$ is more than $k$. To construct the subgraph token corresponds to $\hat{m} = k$, we use $M$ samples and so the probability of not seeing node  $v$ in these samples is $q_{u,v, M} = (1 - p_{u, v})^{M} \leq 1$. Now let $M \rightarrow \infty$ and $v \in \mathcal{N}_{k}(u)$ (i.e., $ p_{u, v} \neq 0$), we have $\lim_{M \rightarrow \infty} q_{u,v, M} = 0$. Accordingly, with large enough $M$, we have $q_{u,v, M} \leq \epsilon$. This means that with a large enough $M$ when $m = k$ and $s = 1$, we sample all the nodes within the $k$-hop neighborhood, meaning that random walk neighborhood sampling at least provide as much information as $k$-hop neighborhood sampling with arbitrary large probability. 
    \\
    Next, we provide an example that $k$-hop neighborhood sampling is not able to distinguish two non-isomorphism graphs, while random walk sampling can. Let $S = \{v_1, v_2, \dots, v_{\ell} \}$ be a set of nodes such that all nodes have shortest path less than $k$ to $u$. Using hyperparamters $m = k$ and arbitrary $M$, let the probability that we get $G[S]$ as the subgraph token be $1 > q_S > 0$. Using $s$ samples, the probability that we do not have $G[S]$ as one of the subgraph tokens is $(1 - q_S)^s$. Now using large $s \rightarrow \infty$, we have $\lim_{s \rightarrow \infty} (1 - q_S)^s = 0$ and so for any arbitrary $\epsilon > 0$ there is a large $s > 0$ such that we see all non-empty subgraphs of the $k$-hop neighborhood with probability more than $1 - \epsilon$, which is more powerful than the neighborhood itself. 
    \\
    Note that the above proof does not necessarily guarantee an efficient sampling, but it guarantees the expressive power.   
\end{proof}

\begin{theorem}[Universality]
    Let $1 \leq p < \infty$, and $\epsilon > 0$. For any continues function $f: [0, 1]^{d \times n} \rightarrow \R^{d \times n}$ that is permutation equivariant, there exists a GMN with positional encoding, $g_p$, such that $\ell^p(f, g) < \epsilon$. 
\end{theorem}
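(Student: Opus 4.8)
The plan is to reduce the statement to the sequence-to-sequence universality result of \citet{wang2023statespace} and to let the positional encoding supply the auxiliary coordinates that make an order-agnostic target realizable by the order-based (recurrent) GMN encoder. First I would instantiate the GMN with node-level tokenization, i.e.\ $m = 0$, so that for an input feature matrix the token sequence is simply $\left(\mathbf{x}_{v_1}\,\|\,p_{v_1},\ \dots,\ \mathbf{x}_{v_n}\,\|\,p_{v_n}\right)$, where $p_{v_i}$ is a fixed positional encoding that separates the $n$ positions (for the purely functional claim any injective assignment, e.g.\ the one-hot index, suffices). In this regime the first layers of the bidirectional Mamba collapse to pointwise linear maps, and the architecture reduces to a stack of bidirectional selective-SSM blocks acting on the $n$ node tokens, followed by the linear readout of Equation~\eqref{eq:node-encodings} — exactly the object to which the SSM results apply.

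Second, I would observe that the bidirectional Mamba stack contains, as a sub-family of its parameters, an ordinary time-invariant deep SSM interleaved with layer-wise nonlinearities: choosing $\mathbf{W}_{\mathbf{B}},\mathbf{W}_{\mathbf{C}},\mathbf{W}_{\boldsymbol{\Delta}}$ so that $\mathbf{B},\mathbf{C},\boldsymbol{\Delta}$ are data-independent constants, taking the convolution kernel to be the identity, and setting the gating weight $\mathbf{W}_{\cdot,2}=0$ so the gate becomes the constant $\sigma(0)$ (absorbed into $\mathbf{W}_{\cdot,1}$), the forward and backward branches become exactly the forward and reversed outputs of a fixed SSM composed with the surrounding pointwise nonlinearities. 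Since a selective SSM is a strict generalization of a time-invariant one, universality of the latter implies it for the former. Then I would invoke \citet{wang2023statespace}: a deep (bidirectional) SSM interleaved with layer-wise nonlinearities uniformly approximates any continuous map between sequences over a compact domain, with bidirectionality being precisely what removes the causal/fading-memory restriction so that each output coordinate can depend on the whole sequence. Applying this to $f$ restricted to the compact cube $[0,1]^{d\times n}$ produces a GMN $g_p$ with uniform-norm error $<\epsilon$, hence $\ell^p(f,g_p)<\epsilon$ for every $1\le p<\infty$ (uniform approximation on a bounded domain upgrades to the $\ell^p$ bound, and the output dimensions match after the readout $\mathbf{W}_{\text{out}}$).

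The step I expect to be the main obstacle is reconciling permutation equivariance of the target with the intrinsic order-dependence of a recurrent encoder. The positional encoding is what resolves this: because $f$ is permutation equivariant, it is determined by its restriction to a fixed row ordering together with the permutation, and concatenating $p_{v_i}$ makes the "which position am I" information available at every token, so the bidirectional scan can simultaneously recover the input ordering and emit the matching output rows; without PE the encoder could not break the symmetry and no single sequential function could agree with $f$ on all permuted copies of a point. I would also note that the statement only asks for \emph{existence} of a GMN $g_p$ approximating $f$ — it does not require $g_p$ itself to be permutation equivariant — which keeps this reduction clean. A secondary technical point to check is that the architectural extras in the Mamba block (the short causal convolution, \texttt{LayerNorm}, the discretization map, and the \texttt{SiLU}/gating nonlinearities) can be configured, as above, to realize (or approximate arbitrarily well) the exact function class covered by \citet{wang2023statespace}, so that their theorem transfers verbatim.
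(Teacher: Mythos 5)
Your proposal takes essentially the same route as the paper: set $m=0$ so GMN reduces to a (bidirectional) SSM stack over node tokens, invoke the sequence-to-sequence universality of SSMs with layer-wise nonlinearity from \citet{wang2023statespace}, and let the positional encoding resolve the mismatch between a permutation-equivariant target and a sequential encoder. The only difference is cosmetic — the paper defers the final step to the construction of \citet{kreuzer2021rethinking} with the padded adjacency matrix as PE, whereas you use an injective index encoding and explicitly verify that the Mamba block's convolution, gating, and selection can be configured to realize a plain deep SSM, which is a useful level of detail the paper omits.
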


\begin{proof}
    Recently,  \citet{wang2023statespace} show that SSMs with layer-wise nonlinearity are universal approximators of any sequence-to-sequence function. We let $m = 0$, meaning we use node tokenization. Using the universality of SSMs for sequence-to-sequence function, the rest of the proof is the same as \citet{kreuzer2021rethinking}, where they use the padded adjacency matrix of $G$ as a positional encoding to prove the same theorem for Transformers. In fact, the universality for sequence-to-sequence functions is enough to show the universality on graphs with a strong positional encoding. 
\end{proof}

\begin{theorem}[Expressive Power w/ PE]\label{thm:universality}
    Given the full set of eigenfunctions and enough parameters, GMNs can distinguish any pair of non-isomorphic graphs and are more powerful than any WL test.
\end{theorem}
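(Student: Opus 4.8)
The plan is to reduce the claim to the preceding Universality theorem together with the classical spectral-reconstruction argument used by \citet{kreuzer2021rethinking} for Transformers. First I would observe that when the positional encoding attached to each node is the \emph{full} set of Laplacian eigenfunctions — the node's coordinates in every eigenvector $u_1,\dots,u_n$ together with the eigenvalues $\lambda_1,\dots,\lambda_n$ — the graph itself is recoverable, since $L=\sum_{i}\lambda_i u_i u_i^\top$ determines the Laplacian, hence the adjacency matrix, hence the isomorphism class of $G$. Consequently there is a continuous, permutation-equivariant map $f^\star$ sending $(\mathbf{X},\mathrm{PE})$ to node representations from which one can read off a canonical quantity $\iota(G,\mathbf{X})$ that coincides for two attributed graphs exactly when they are isomorphic.

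Next I would invoke the Universality theorem: on the compact domain of valid feature/PE tuples there is a GMN $g$ (with this positional encoding) satisfying $\ell^p(f^\star,g)<\epsilon$ for any prescribed $\epsilon>0$. Fixing a pair of non-isomorphic graphs $G_1\not\cong G_2$ of a given size, the separating property of $\iota$ means the targets of $f^\star$ on $G_1$ and $G_2$ differ by some margin $\delta>0$; taking $\epsilon<\delta/2$ forces the readouts of $g$ on $G_1$ and $G_2$ to differ as well, while permutation equivariance of $g$ guarantees it assigns the same output orbit to isomorphic graphs. Hence $g$ realizes a graph function distinguishing $G_1$ from $G_2$. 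Since this works for every non-isomorphic pair, GMNs with the full eigenfunction PE distinguish all non-isomorphic graphs; in particular they separate the pairs that $k$-WL fails to separate (such pairs of non-isomorphic graphs exist for every fixed $k$), while trivially subsuming every distinction $k$-WL can make, so GMNs are strictly more powerful than any WL test.

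The main obstacle is the well-known ambiguity of Laplacian eigenfunctions: sign flips of each $u_i$ and arbitrary orthogonal changes of basis within an eigenspace of multiplicity greater than one mean that ``$\mathrm{PE}$'' is not literally a function of $G$, so $f^\star$ must be made invariant to these symmetries before the Universality theorem can be applied as a black box. I would resolve this by letting $f^\star$ depend on the eigenfunctions only through the spectral projectors $P_\lambda=\sum_{i:\lambda_i=\lambda} u_i u_i^\top$ (equivalently, through $L$ itself), which are canonical; then $f^\star\circ\mathrm{PE}$ is well defined and continuous regardless of which eigenbasis representatives are supplied, and the remainder of the argument is unchanged. A secondary point to treat carefully is that the $\epsilon$-approximation yields distinguishability only after the pair — and hence the margin $\delta$ — is fixed, so the conclusion is naturally stated per pair (``can distinguish any pair of non-isomorphic graphs'') rather than via a single uniform $\epsilon$, which is exactly the form claimed.
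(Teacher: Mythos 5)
Your proposal is correct and follows essentially the same route as the paper: invoke the Universality theorem with a positional encoding rich enough to determine the graph, then approximate a permutation-invariant function that separates non-isomorphic graphs. The paper's own proof is a one-liner that substitutes the padded adjacency matrix for the eigenfunctions as the PE; your version is more faithful to the theorem's wording and supplies details the paper elides, namely the reconstruction $L=\sum_i \lambda_i u_i u_i^\top$, the eigenbasis-ambiguity fix via spectral projectors, and the per-pair margin argument.
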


\begin{proof}
    Due to the universality of GMNs in Theorem~\ref{thm:universality}, one can use a GMN with the padded adjacency matrix of $G$ as positional encoding and learn a function that is invariant under node index permutations and maps non-isomorphic graphs to different values. 
\end{proof}

\begin{theorem}[Expressive Power w/o PE and MPNN]
    With enough parameter, 
    for every $k \geq 1$ there are graphs that are distinguishable by GMNs, but not by $k$-WL test, showing that their expressivity power is not bounded by any WL test.
\end{theorem}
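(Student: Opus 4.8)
The plan is to reduce this statement to the analogous result already established for CRaWl~\citep{nshoff2023walking}. As the remark following the theorem stresses, the expressive power claimed here comes entirely from the walk-based neighborhood sampling of \S\ref{sec:tokenization} together with the walk-feature (RWF) local encoder $\phi$ of \citet{nshoff2023walking}, and not from any property of the Mamba blocks. So I would instantiate a GMN with $m\ge 1$, no PE/SE, no MPNN augmentation, and with $\phi$ chosen to be exactly the RWF encoder — which by construction records, for each sampled walk, the node features, the edge features traversed, and the windowed local adjacency/identity information along the walk, i.e. precisely the walk-level features CRaWl convolves over.

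First I would show that such a GMN is at least as expressive as CRaWl: for any pair of graphs that CRaWl distinguishes there is a GMN of this form that distinguishes them. The tokenization step samples the same family of random walks that CRaWl uses, and with $M,s$ large enough the sampled walk multiset agrees with the idealized one with probability $\ge 1-\epsilon$ by Theorem~\ref{thm:neighborhood}; on that event, $\phi=\mathrm{RWF}$ produces the same per-walk embeddings as CRaWl. It then remains to check that the bidirectional Mamba stack acting on the resulting sequence of token embeddings can realize CRaWl's pooling (a sum or mean over walks followed by an MLP readout). This is immediate: a diagonal selective SSM layer with $\bar{\mathbf{A}}=\mathbf{I}$ and suitable input/output projections computes a running sum along the sequence, so its last state — which, by the ordering convention of \S\ref{sec:tokenization}, is the one exposed as the node encoding — equals the total sum of token embeddings, and composing with a sufficiently wide MLP reproduces any function CRaWl computes. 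Alternatively one may simply invoke the universality of SSMs for sequence-to-sequence maps from \citet{wang2023statespace} to avoid an explicit construction. Hence GMNs without PE and MPNN are at least as expressive as CRaWl, up to an arbitrarily small sampling-failure probability that is absorbed into "enough parameters/samples".

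Next I would invoke the CRaWl expressiveness theorem of \citet{nshoff2023walking}: for every $k\ge 1$ there is a pair of non-isomorphic graphs $(G_k,H_k)$ that $k$-WL cannot distinguish but that CRaWl — with walk length and sample count depending on $k$ — distinguishes (with probability bounded away from $1/2$, or deterministically in the derandomized variant that enumerates all walks up to a fixed length). Applying the reduction above to $(G_k,H_k)$, with $m$ set to the required walk length and $M,s$ large enough that the sampling-failure probability falls below the CRaWl separation margin, yields a GMN without PE and MPNN that separates $G_k$ from $H_k$. Since $k$-WL cannot and $k$ was arbitrary, the expressive power of GMNs is not bounded by any WL test, which is the claim; note that the GMN is allowed to depend on $k$, so no uniformity is needed.

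The main obstacle is the bookkeeping in the first step: one must argue carefully that the RWF encoder $\phi$, which the paper describes as acting on the induced subgraph of a \emph{union} of walks, still retains the within-walk sequential and windowed-adjacency features that CRaWl's argument exploits — that is, that passing from "one sequence per walk" to "one token per walk (alongside the union subgraph)" loses nothing relevant. This holds because RWF is defined per walk and the tokenization keeps the walks individually available as (shuffled) tokens, but it needs to be stated precisely. The secondary nuisance is the shared probabilistic nature of the walk sampling; this is handled exactly as in \citet{nshoff2023walking}, using Theorem~\ref{thm:neighborhood} to make the sampled neighborhood agree with the idealized one with probability arbitrarily close to one.
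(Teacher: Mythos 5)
Your proposal takes essentially the same route as the paper: instantiate $\phi$ as the RWF encoder of \citet{nshoff2023walking}, argue that the (bidirectional) SSM block on top can reproduce whatever CRaWl computes over the walk features (the paper phrases this as the SSM simulating CRaWl's 1-d CNN, you phrase it as the SSM realizing the pooling-plus-readout, which is the same reduction), and then import CRaWl's $k$-WL separation result. Your version is in fact more careful than the paper's one-paragraph proof — in particular about the union-of-walks tokenization versus per-walk features and the sampling probability — but the underlying argument is identical.
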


\begin{proof}
    The proof of this theorem directly comes from the recent work of \citet{nshoff2023walking}, where they prove a similar theorem for CRaWl~\citep{nshoff2023walking}. That is, using RWF as $\phi(.)$ in GMNs (without MPNN), makes the GMNs as powerful as CRaWl. The reason is CRaWl uses 1-d CNN on top of RWF, while GMNs use Bidirectional Mamba block on top of RWF: using a broadcast SMM, this block becomes similar to 1-d CNN.

\end{proof}

\end{document}